\DeclareMathOperator*{\argmax}{arg\,max}
\newcommand{\m}[1]{[\![#1]\!]}
\newcommand{\ms}[1]{[\![#1]\!]}
\newcommand{\pms}[1]{[\![\![#1]\!]\!]}
\newcommand{\mm}[1]{(\!(#1)\!)}
\newcommand{\ams}[1]{(\!(#1)\!)}
\newcommand{\pams}[1]{(\!(\!(#1)\!)\!)}
\newcommand{\cent}{\mathrel{\scalebox{1}[1.5]{$\shortmid$}\mkern-3.1mu\raisebox{0.1ex}{$=$}}}
\newcommand{\ent}{\mathrel{\scalebox{1}[1.5]{$\shortmid$}\mkern-3.1mu\raisebox{0.1ex}{$\equiv$}}}
\begin{document}
\title{Towards Unifying Perceptual Reasoning and Logical Reasoning}
\author{Hiroyuki Kido\orcidID{0000-0002-7622-4428}}
\authorrunning{H. Kido}
\institute{Cardiff University, Park Place, Cardiff, CF10 3AT, UK\\
\email{KidoH@cardiff.ac.uk}}
\maketitle              % typeset the header of the contribution
\begin{abstract}
An increasing number of scientific experiments support the view of perception as Bayesian inference, which is rooted in Helmholtz's view of perception as unconscious inference. Recent study of logic presents a view of logical reasoning as Bayesian inference. In this paper, we give a simple probabilistic model that is applicable to both perceptual reasoning and logical reasoning. We show that the model unifies the two essential processes common in perceptual and logical systems: on the one hand, the process by which perceptual and logical knowledge is derived from another knowledge, and on the other hand, the process by which such knowledge is derived from data. We fully characterise the model in terms of logical consequence relations.
\keywords{Generative model  \and Logic \and Neuroscience \and Learning.}
\end{abstract}
%
%%%%%%%%%%%%%%%%%%%%%%%%%%%%%%%%%%%%%%%%%%%%%%%%%%%%%%%%%%%%%%%%%%%%%%%%%%%%%%%%%%%%%%%%%%%%%%%%%%%%%%%%%%%%%%%%%%%%%%%%%%%%%%%%%%%%%%%%%%%%%%%%%%%%%%%%%%%%%%%%%%%%%%%%%%%%%%%%%%%%%%%%%%%%%%%%%%%%%%%%
\section{Introduction}
Bayesian brain hypothesis \cite{knill:04}, free-energy principle \cite{friston:10} and predictive coding \cite{Hohwy:08} are key brain theories that account for action, perception and learning. They commonly argue that the brain unconsciously and actively predicts and perceives the world using the belief of states of the world. One would be surprised if one suddenly heard a loud noise. A reasonable explanation for the sense of surprise is that, using the belief of states of the world, the brain predicts that its occurrence will not happen. However, what is not yet explored or understood in this line of research is logical reasoning, the core of human rational thought. One would be surprised by the scene of a person living a hundred years ago having a smartphone in a film. A reasonable explanation for this sense of surprise is that the brain predicts that this scenario will not happen on the basis of logical reasoning from what one believes and observes. An important open question is what theory of reasoning gives a unified account for these kinds of perceptual reasoning and logical reasoning.
\par
This paper gives a simple probabilistic model to tackle the question. The current computational approaches to perception (e.g., \cite{Smith:22,Hohwy:08,Hohwy:14,knill:04,Funamizu:15}) define how perceptual knowledge is derived from another knowledge, including observed data, using Bayes' theorem or its approximation. In contrast, our model distinctively defines how such knowledge is derived from stored data. The model unifies two essential processes not only of the perceptual systems but of many current logical systems: on the one hand, the process by which the posterior is derived from the likelihood and prior, and on the other hand, the process by which the likelihood and prior are derived from data. The complexity of reasoning with our model is shown to be linear with respect to the number of data. 
\par
The contributions of this paper are summarised as follows. We present a theory of reasoning that underlies perceptual reasoning and logical reasoning. In particular, this paper gives researchers in neuroscience another type of computational model of perception that provides another view of perception as Bayesian inference. This paper also gives researchers in logic an algorithm to rationally reason from data. Future work includes the neuroscientific validity of the model.
\par
This paper is organised as follows. We motivate our work in Section 2 with simple examples of perceptual reasoning and logical reasoning. We define a simple probabilistic model in Section 3, and then look at the properties of the model in Section 4 for its correctness. We conclude with discussion in Section 5.
%%%%%%%%%%%%%%%%%%%%%%%%%%%%%%%%%%%%%%%%%%%%%%%%%%%%%%%%%%%%%%%%%%%%%%%%%%%%%%%%%%%%%%%%%%%%%%%%%%%%%%%%%%%%%%%%%%%%%%%%%%%%%%%%%%%%%%%%%%%%%%%%%%%%%%%%%%%%%%%%%%%%%%%%%%%%%%%%%%%%%%%%%%%%%%%%%%%%%%%%
\section{Motivating Examples}\label{sec:motivation}
\begin{figure}[t]
\begin{center}
\includegraphics[scale=0.35]{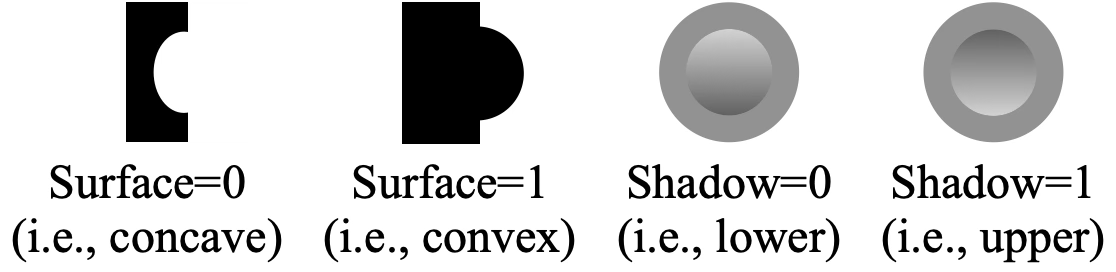}
\caption{Most people tend to perceive a concave surface when a shadow appears in the upper portion of the disk. This is explained as the result of our unconscious belief that light sources typically come from above.}
\label{fig:illusion}
\end{center}
\end{figure}
Human perception is often described using Bayesian inference. Consider which surface (concave or convex) is perceived when a shadow appears in the upper portion of the disk (see Figure \ref{fig:illusion})? To model this phenomenon, the authors \cite{Smith:22} define the prior distribution, denoted by $p(\text{Surface})$, and the likelihood function, denoted by $p(\text{Surface}|\text{Shadow})$, as follows.
\begin{eqnarray*}
&&p(\text{Surface}=\text{concave})=0.45\\
&&p(\text{Surface}=\text{convex})=0.55\\
&&p(\text{Shadow}=\text{upper}|\text{Surface}=\text{concave}, \text{light above})=0.9\\
&&p(\text{Shadow}=\text{upper}|\text{Surface}=\text{convex}, \text{light above})=0.1
\end{eqnarray*}
The prior represents the belief about the surface of the object before observing the shadow. The above equations show that the probability of convex is slightly higher than the probability of concave. The likelihood is the belief that the surface causes the upper shadow. Here, `light above' is the unconscious belief that light sources typically come from above. The above equations show that the probability that the concave surface causes the upper shadow is much higher than the probability that the convex surface causes the upper shadow. The perception is described by the posterior distribution given as follows.
\begin{eqnarray*}
&&p(\text{Surface}|\text{Shadow}, \text{light above})=\frac{p(\text{Surface}, \text{Shadow}, \text{light above})}{p(\text{Shadow}, \text{light above})}\\
&&=\frac{p(\text{Shadow}|\text{Surface},\text{light above})p(\text{Surface}|\text{light above})p(\text{light above})}{p(\text{Shadow}|\text{light above})p(\text{light above})}\\
&&\propto p(\text{Shadow}|\text{Surface},\text{light above})p(\text{Surface}|\text{light above})
\end{eqnarray*}
The first line is an application of the definition of conditional probability, and the second line is the application of the product rule \cite{bishop:06}. Since the denominator is a normalising constant, we omit it using the symbol `$\propto$' (meaning `proportional to') in the third line. Given the assumption that surfaces and light sources are independent, i.e., $p(\text{Surface}|\text{light above})=p(\text{Surface})$, we have
\begin{eqnarray*}
p(\text{Surface}|\text{Shadow}, \text{light above})&\propto&p(\text{Shadow}|\text{Surface},\text{light above})p(\text{Surface}).
\end{eqnarray*}
We thus have the following values.
\begin{eqnarray*}
&&p(\text{Surface}=\text{concave}|\text{Shadow}=\text{upper}, \text{light above})\propto 0.9\cdot0.45=0.405\\
&&p(\text{Surface}=\text{convex}|\text{Shadow}=\text{upper}, \text{light above})\propto 0.1\cdot0.55=0.055
\end{eqnarray*}
Therefore, we have the following outcomes.
\begin{eqnarray*}
&&p(\text{Surface}=\text{concave}|\text{Shadow}=\text{upper}, \text{light above})=\frac{0.405}{0.405+0.055}\approx0.88\\
&&p(\text{Surface}=\text{convex}|\text{Shadow}=\text{upper}, \text{light above})=\frac{0.055}{0.405+0.055}\approx0.12
\end{eqnarray*}
The posterior shows that the probability of the concave surface is higher than the probability of the convex surface after observing the upper shadow.
\par
An increasing number of scientific experiments support this view of perception as Bayesian inference, which is rooted in Helmholtz's view of perception as unconscious inference \cite{Helmholtz:25}. However, a limitation of this approach, as well as free energy principle \cite{friston:10} and predictive coding \cite{knill:04}, is the disconnection between the two essential processes: on the one hand, the process by which the posterior is derived from the likelihood and prior, and on the other hand, the process by which the likelihood and prior are derived from data. The latter process, often referred to as parameter estimation or learning, is typically dealt with using maximum likelihood estimation (MLE) or maximum a posteriori estimation.
\begin{figure}[t]
\begin{tabular}{cc}
\begin{minipage}{0.59\hsize}
\begin{center}
\begin{tabular}{l|ll|l}
Model & Surface & Shadow & Data\\\hline
$m_{1}$ & 0 (concave) & 0 (lower) & $d_{1}$\\
$m_{2}$ & 0 (concave) & 1 (upper) & $d_{2},d_{3},d_{4}$\\
$m_{3}$ & 1 (convex) & 0 (lower) & $d_{5}, d_{6},d_{7},d_{8},d_{9}$\\
$m_{4}$ & 1 (convex) & 1 (upper) & $d_{10}$
\end{tabular}
\end{center}
\end{minipage}
\begin{minipage}{0.4\hsize}
\begin{center}
\includegraphics[scale=0.17]{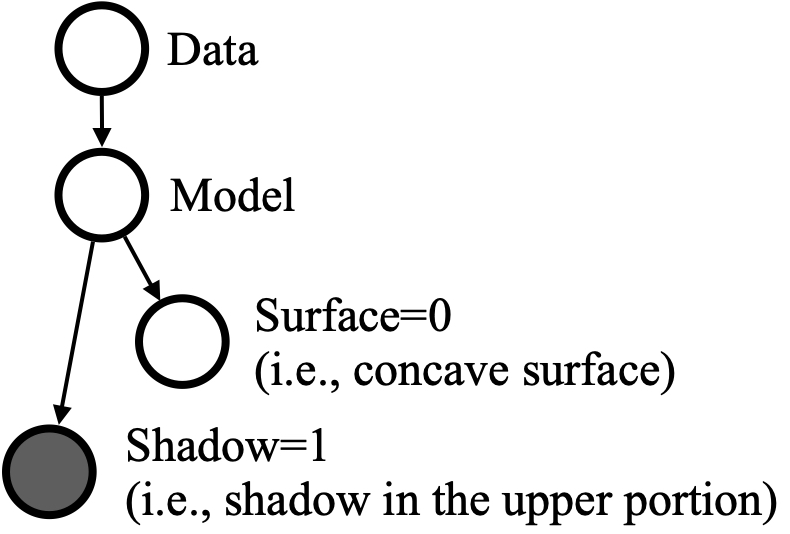}
\end{center}
\end{minipage}
\end{tabular}
\caption{Data (left) and graphical model (right) for perceptual reasoning.}
\label{PR}
\end{figure}
\par
Figure \ref{PR} summarises our approach applied to the perception problem. Each row of the left table shows a different state of the world built with the two random variables, {\it Surface} and {\it Shadow}. The right most column shows the data evidencing the models, which reflects our intuition that the lower (resp. upper) shadow often comes with convex (resp. concave) surface. This data is an input into the graphical model shown on the right hand side of the figure. The model, formally defined in the next section, shows the process by which data generate the perception. We will show that this model causes the following posterior.
\begin{eqnarray*}
p(concave|upper)&=&\frac{\sum_{n=1}^{4}p(concave|m_{n})p(upper|m_{n})\sum_{k=1}^{10}p(m_{n}|d_{k})p(d_{k})}{\sum_{n=1}^{4}p(upper|m_{n})\sum_{k=1}^{10}p(m_{n}|d_{k})p(d_{k})}\\
&=&\frac{\sum_{n=1}^{4}p(concave|m_{n})p(upper|m_{n})K_{n}}{\sum_{n=1}^{4}p(upper|m_{n})K_{n}}=\frac{K_{2}}{K_{2}+K_{4}}=\frac{3}{4}
\end{eqnarray*}
Here, $K_{n}$ denotes the number of data in the $n$-th model and $p(X|m_{n})=1$ if $X$ is the case in the $n$-th model and $0$ otherwise. We ommited `light above' for simplicity. The correctness of the result will be shown by the fact that our model and the existing approaches with MLE always give the same results. For example, given the data shown in Figure \ref{PR}, the approach used in Figure \ref{fig:illusion} gives
\begin{eqnarray*}
p(concave|upper)&=&\frac{p(upper|concave)p(concave)}{p(upper|concave)p(concave)+p(upper|convex)p(convex)}\\
&=&\frac{3/4\times 4/10}{3/4\times 4/10 + 1/6\times 6/10}=\frac{3}{4},
\end{eqnarray*}
where the value of each term was obtained using MLE.
\par
We will also show that the same model is applicable to logical reasoning. The left table in Figure \ref{LR} shows data evidencing the models built with two random variables, Rain and Wet, representing propositional symbols, rain and wet (abbreviated as $r$ and $w$ below), respectively. We will show that this model causes the following posterior.
\begin{eqnarray*}
p(w|r,r\to w)&=&\frac{\sum_{n=1}^{4}p(w|m_{n})p(r|m_{n})p(r\to w|m_{n})\sum_{k=1}^{10}p(m_{n}|d_{k})p(d_{k})}{\sum_{n=1}^{4}p(r|m_{n})p(r\to w|m_{n})\sum_{k=1}^{10}p(m_{n}|d_{k})p(d_{k})}\\
&=&\frac{\sum_{n=1}^{4}p(w|m_{n})p(r|m_{n})p(r\to w|m_{n})K_{n}}{\sum_{n=1}^{4}p(r|m_{n})p(r\to w|m_{n})K_{n}}=\frac{K_{4}}{K_{4}}=1
\end{eqnarray*}
The correctness of the model will be shown by the fact that a probability of one corresponds to a probabilistic improvement of the classical consequence relation.
\begin{figure}[t]
\begin{tabular}{cc}
\begin{minipage}{0.46\hsize}
\begin{center}
\begin{tabular}{l|ll|l}
Model & $Rain$ & $W\!et$ & Data\\\hline
$m_{1}$ & 0 (false) & 0 (false) & $d_{1},d_{2},d_{3},d_{4}$\\
$m_{2}$ & 0 (false) & 1 (true) & $d_{5},d_{6}$\\
$m_{3}$ & 1 (true) & 0 (false) & $d_{7}$\\
$m_{4}$ & 1 (true) & 1 (true) & $d_{8},d_{9},d_{10}$
\end{tabular}
\end{center}
\end{minipage}
\begin{minipage}{0.53\hsize}
\begin{center}
\includegraphics[scale=0.17]{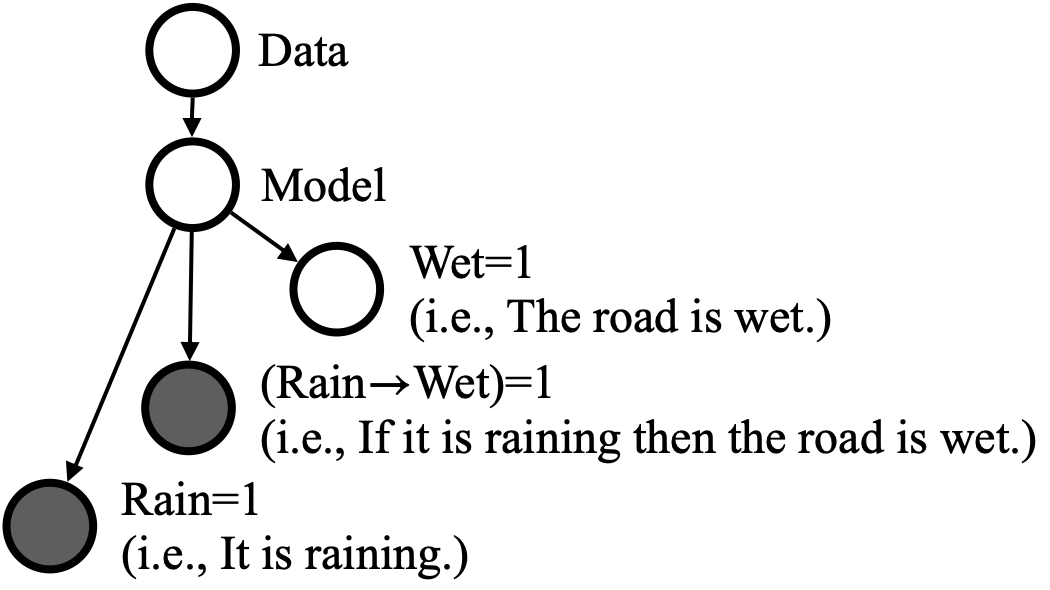}
\end{center}
\end{minipage}
\end{tabular}
\caption{Data (left) and graphical model (right) for logical reasoning.}
\label{LR}
\end{figure}
%
%%%%%%%%%%%%%%%%%%%%%%%%%%%%%%%%%%%%%%%%%%%%%%%%%%%%%%%%%%%%%%%%%%%%%%%%%%%%%%%%%%%%%%%%%%%%%%%%%%%%%%%%%%%%%%%%%%%%%%%%%%%%%%%%%%%%%%%%%%%%%%%%%%%%%%%%%%%%%%%%%%%%%%%%%%%%%%%%%%%%%%%%%%%%%%%%%%%%%%%%
\section{Generative Logic Models}\label{sec:model}
Let ${\cal D}=\{d_{1},d_{2},...,d_{K}\}$ be a multiset of $K$ data. $D$ is a random variable whose realisations are data in ${\cal D}$. For all data $d_{k}\in{\cal D}$, we define the probability of $d_{k}$ as $p(D=d_{k})=1/K$.
\par
Let ${\cal L}$ represent a propositional or first-order language. For the sake of simplicity, we assume no function symbol or open formula in ${\cal L}$. Let ${\cal M}=\{m_{1},m_{2},...,m_{N}\}$ be the set of all $N$ models associated with ${\cal L}$. We assume ${\cal D}$ to be complete with respect to ${\cal M}$, meaning that each data in ${\cal D}$ belongs to a single model in ${\cal M}$. $m$ is a function that maps each data to such a single model. $M$ is a random variable whose realisations are models in ${\cal M}$. For all models $m_{n}\in{\cal M}$ and data $d_{k}\in{\cal D}$, we define the conditional probability of $m_{n}$ given $d_{k}$, as follows.
\begin{eqnarray*}
&&p(M=m_{n}|D=d_{k})=
\begin{cases}
1 & \text{if } m_{n}=m(d_{k})\\
0 & \text{otherwise }
\end{cases}
\end{eqnarray*}
Formal logic considers an interpretation on each model. The interpretation is a function that maps each formula to a truth value, which represents knowledge of the world. We here introduce parameter $\mu\in[0,1]$ to represent the extent to which each model is taken for granted in the interpretation. Concretely, $\mu$ denotes the probability that a formula is interpreted as being true (resp. false) in a model where it is true (resp. false). $1-\mu$ is therefore the probability that a formula is interpreted as being true (resp. false) in a model where it is false (resp. true). We assume that each formula is a random variable whose realisations are 0 and 1, denoting false and true, respectively. For all models $m_{n}\in{\cal M}$ and formulae $\alpha\in L$, we define the conditional probability of each truth value of $\alpha$ given $m_{n}$, as follows.
\begin{eqnarray*}
&&p(\alpha=1|M=m_{n})=
\begin{cases}
\mu & \text{if } m_{n}\in\llbracket\alpha=1\rrbracket\\
1-\mu & \text{otherwise }
\end{cases}
\\
&&p(\alpha=0|M=m_{n})=
\begin{cases}
\mu & \text{if } m_{n}\in\llbracket\alpha=0\rrbracket\\
1-\mu & \text{otherwise }
\end{cases}
\end{eqnarray*}
Here, $\llbracket\alpha=1\rrbracket$ denotes the set of all models in which $\alpha$ is true, and $\llbracket\alpha=0\rrbracket$ the set of all models in which $\alpha$ is false. The above expressions can be simply written as a Bernoulli distribution with parameter $\mu\in[0,1]$, i.e.,
\begin{eqnarray*}
p(\alpha|M=m_{n})=\mu^{\llbracket\alpha\rrbracket_{m_{n}}}(1-\mu)^{1-\llbracket\alpha\rrbracket_{m_{n}}}.
\end{eqnarray*}
Here, $\llbracket\alpha\rrbracket_{m_{n}}=1$ if $m_{n}\in\llbracket\alpha\rrbracket$ and $\llbracket\alpha\rrbracket_{m_{n}}=0$ otherwise. Recall that $\llbracket\alpha\rrbracket_{m_{n}}$ is either $\llbracket\alpha=0\rrbracket_{m_{n}}$ or $\llbracket\alpha=1\rrbracket_{m_{n}}$ as $\alpha$ is a random variable.
%
%%%%%%%%%%%%%%%%%%%%%%%%%%%%%%%%%%%%%%%%%%%%%%%%%%%%%%%%%%%%%%%%%%%%%%%%%%%%%%%%%%%%%%%%%%%%%%%%%%%%%%%%%%%%%%%%%%%%%%%%%%%%%%%%%%%%%%%%%%%%%%%%%%%%%%%%%%%%%%%%%%%%%%%%%%%%%%%%
\begin{comment}
\begin{example}
Suppose the four models shown in Figure \ref{LR}. Formula $rain\to wet\in{\cal L}$ is true in $m_{1}$, $m_{3}$ and $m_{4}$, and false in $m_{2}$. Therefore, $\llbracket(rain\to wet)=1\rrbracket=\{m_{1},m_{3},m_{4}\}$ and $\llbracket(rain\to wet)=0\rrbracket=\{m_{2}\}$. The bottom table in Table \ref{ex:3distributions} shows the conditional probability distributions over the truth values of $rain\to wet$ given the models.
\end{example}
\end{comment}
%%%%%%%%%%%%%%%%%%%%%%%%%%%%%%%%%%%%%%%%%%%%%%%%%%%%%%%%%%%%%%%%%%%%%%%%%%%%%%%%%%%%%%%%%%%%%%%%%%%%%%%%%%%%%%%%%%%%%%%%%%%%%%%%%%%%%%%%%%%%%%%%%%%%%%%%%%%%%%%%%%%%%%%%%%%%%%%%
%
In classical logic, given a model, the truth value of each formula is independently determined. In probability theory, this means that the truth values of any two formulae $\alpha_{1}$ and $\alpha_{2}$ are conditionally independent given a model $m_{n}$, i.e., $p(\alpha_{1},\alpha_{2}|M=m_{n})=p(\alpha_{1}|M=m_{n})p(\alpha_{2}|M=m_{n})$. Note that the conditional independence holds not only for atomic formulae but for compound formulae as well.\footnote{In contrast, independence, i.e., $p(\alpha_{1},\alpha_{2})= p(\alpha_{1})p(\alpha_{2})$, generally holds for neither atomic formulae nor compound formulae.} Let $\Gamma=\{\alpha_{1},\alpha_{2},...,\alpha_{J}\}$ be a multiset of $J$ formulae. We thus have
\begin{eqnarray*}
p(\Gamma|M=m_{n})=\prod_{j=1}^{J}p(\alpha_{j}|M=m_{n}).
\end{eqnarray*}
Thus far, we have defined $p(D)$ and $p(M|D)$ as categorical distributions and $p(\Gamma|M)$ as Bernoulli distributions with parameter $\mu$. Given a value of the parameter $\mu$, they provide the full joint distribution over all of the random variables, i.e. $p(\Gamma,M,D)$. We call $\{p(\Gamma|M,\mu), p(M|D), p(D)\}$ a generative logical model. In sum, the generative logical model defines the probabilistic process by which data generates models and then the models generates the truth values of formulae. The generative logical model meets the following logical property.
\begin{proposition}\label{kolmogorov}
The generative logical model satisfies the Kolmogorov's axioms.
\end{proposition}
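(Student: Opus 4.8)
The plan is to verify the three Kolmogorov axioms directly for the joint distribution $p(\Gamma, M, D)$ constructed from the three building blocks, and then argue that any marginal or conditional obtained from it inherits the axioms. The three axioms to check are: (i) non-negativity, $p(A) \ge 0$ for every event $A$; (ii) unit measure, $p(\Omega) = 1$ for the whole sample space; and (iii) countable (here finite) additivity, $p(A \cup B) = p(A) + p(B)$ for disjoint $A, B$. Since the sample space is finite (finitely many data, finitely many models, and each formula multiset $\Gamma$ ranges over finitely many truth-value assignments), it suffices to show that the elementary probabilities $p(\Gamma, M=m_n, D=d_k)$ are non-negative and sum to one; additivity is then automatic because the probability of any event is defined as the sum of the elementary probabilities it contains.

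First I would write the joint mass function using the chain rule in the order dictated by the generative story, namely
\begin{eqnarray*}
p(\Gamma, M=m_n, D=d_k) = p(\Gamma \mid M=m_n)\, p(M=m_n \mid D=d_k)\, p(D=d_k),
\end{eqnarray*}
which is legitimate as a \emph{definition} of the joint, and then substitute the three given forms: $p(D=d_k) = 1/K$; $p(M=m_n \mid D=d_k) \in \{0,1\}$ with exactly one $m_n$ (namely $m(d_k)$) receiving the value $1$; and $p(\Gamma \mid M=m_n) = \prod_{j=1}^{J} \mu^{\llbracket \alpha_j \rrbracket_{m_n}} (1-\mu)^{1 - \llbracket \alpha_j \rrbracket_{m_n}}$. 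Non-negativity is immediate: each factor lies in $[0,1]$ because $\mu \in [0,1]$, and a product of non-negative numbers is non-negative.

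Next I would check that everything sums to one, summing in the reverse order of conditioning. Summing $p(\Gamma \mid M=m_n)$ over the two truth values of each formula $\alpha_j$ gives $\mu + (1-\mu) = 1$ for each $j$ independently, so $\sum_{\Gamma} p(\Gamma \mid M=m_n) = 1$ for every fixed $m_n$; this uses the product form established earlier in the excerpt. Then $\sum_{n=1}^{N} p(M=m_n \mid D=d_k) = 1$ since exactly one term is $1$ and the rest are $0$ (here the completeness of $\mathcal{D}$ with respect to $\mathcal{M}$ guarantees $m(d_k)$ is well defined and unique). Finally $\sum_{k=1}^{K} p(D=d_k) = K \cdot (1/K) = 1$. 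Nesting these three observations gives $\sum_{\Gamma}\sum_{n}\sum_{k} p(\Gamma, M=m_n, D=d_k) = 1$, i.e. $p(\Omega) = 1$. Finite additivity holds by construction, and one should remark that any marginal (e.g. $p(\Gamma)$, $p(M)$) or conditional (e.g. the posteriors displayed in Section 2) obtained from this joint is again a probability distribution, since marginalisation preserves non-negativity and total mass, and conditioning on an event of positive probability is a renormalisation that trivially satisfies the axioms.

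I do not expect a genuine obstacle here — the proposition is essentially a sanity check that the three stipulated distributions glue into a bona fide joint. The only points needing a little care are (a) making explicit that the joint is \emph{defined} via the chain rule rather than derived, so that we are checking consistency of a definition rather than proving an identity; (b) invoking the completeness assumption to ensure $p(M \mid D)$ really is a categorical distribution (its column sums are $1$, not $0$); and (c) noting that the claim covers not just the joint but the whole family of distributions the model induces, which is what makes the later Bayesian manipulations in the paper legitimate. If one wanted to be maximally explicit, the mild subtlety is that $\Gamma$ is a \emph{multiset} of formulae, so when summing over assignments one must be slightly careful about whether repeated formulae are treated as the same random variable or as independent copies; under the paper's convention each $\alpha_j$ is its own Bernoulli variable, so the factorised sum still collapses to $1$, and this is the one place a careless reader might stumble.
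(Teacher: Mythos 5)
Your verification is sound: defining the joint by the chain rule, checking non-negativity from $\mu\in[0,1]$, and collapsing the triple sum via $\mu+(1-\mu)=1$, the uniqueness of $m(d_k)$ (completeness of ${\cal D}$), and $\sum_k 1/K=1$ is exactly the right sanity check, and finite additivity plus closure under marginalisation and conditioning then follow as you say. Note, however, that the paper itself offers no in-text argument to compare against — its ``proof'' is a pointer to the cited earlier work \cite{Kido:22} — so your write-up is, in effect, supplying the verification the paper omits, and it is presumably close in spirit to what that reference does. One interpretive caveat worth flagging: you read the proposition as saying that the joint $p(\Gamma,M,D)$ is a bona fide probability measure, which holds for every $\mu\in[0,1]$ and is what legitimises the later Bayesian manipulations; if instead one read it as a claim about the induced probabilities of \emph{formulae} in the logician's sense (tautologies get probability $1$, additivity over mutually exclusive formulae), that stronger statement would only hold under the Boolean assumption $\mu=1$, since e.g.\ $p(\alpha)=\mu$ for a tautology $\alpha$. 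Your multiset remark is also apt: treating each $\alpha_j$ as its own Bernoulli copy is precisely the convention under which the factorised sum collapses to $1$.
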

\begin{proof}
See \cite{Kido:22}.
\qed
\end{proof}
From Proposition \ref{kolmogorov}, we can show that $p(\alpha=0)=p(\neg\alpha=1)$, for all $\alpha\in{\cal L}$. We therefore replace $\alpha=0$ by $\lnot\alpha=1$ and then abbreviate $\lnot\alpha=1$ to $\lnot\alpha$. We also abbreviate $M=m_{n}$ to $m_{n}$ and $D=d_{k}$ to $d_{k}$.
%
%%%%%%%%%%%%%%%%%%%%%%%%%%%%%%%%%%%%%%%%%%%%%%%%%%%%%%%%%%%%%%%%%%%%%%%%%%%%%%%%%%
\begin{example}\label{ex:propositional}
Let $\{p(\Gamma|M,\mu=1), p(M|D), p(D)\}$ be a generative logic model constructed with the models and data shown in Figure \ref{LR}. The following is an example of probabilistic reasoning on propositional formulae.
\begin{eqnarray*}
p(w|r)&=&\frac{\sum_{n=1}^{4}\sum_{k=1}^{10}p(w,r,m_{n},d_{k})}{\sum_{n=1}^{4}\sum_{k=1}^{10}p(r,m_{n},d_{k})}=\frac{\sum_{n=1}^{4}p(w|m_{n})p(r|m_{n})\sum_{k=1}^{10}p(m_{n}|d_{k})p(d_{k})}{\sum_{n=1}^{4}p(r|m_{n})\sum_{k=1}^{10}p(m_{n}|d_{k})p(d_{k})}\\
&=&\frac{\sum_{n=1}^{4}\llbracket w\rrbracket_{m_{n}}\llbracket r\rrbracket_{m_{n}}\sum_{k=1}^{10}p(m_{n}|d_{k})}{\sum_{n=1}^{4}\llbracket r\rrbracket_{m_{n}}\sum_{k=1}^{10}p(m_{n}|d_{k})}=\frac{\sum_{k=1}^{10}\llbracket w\rrbracket_{m(d_{k})}\llbracket r\rrbracket_{m(d_{k})}}{\sum_{k=1}^{10}\llbracket r\rrbracket_{m(d_{k})}}=\frac{3}{4}
\end{eqnarray*}
Here, $r$ and $w$ denote $rain$ and $wet$, respectively. In line 2, we eliminated the summation over models based on $\sum_{n}p(\alpha|m_{n})p(m_{n}|d_{k})=\ms{\alpha}_{m(d_{k})}$. This is an important technique as it causes the linear complexity with respect to the number of data, i.e., $O(K)$, which is independent from the number of models exponentially increasing with respect to the number of symbols in propositional logic and unbounded in first-order logic.
\end{example}
\section{Correctness}
%%%%%%%%%%%%%%%%%%%%%%%%%%%%%%%%%%%%%%%%%%%%%%%%%%%%%%%%%%%%%%%%%%%%%%%%%%%%%%%%%%%%%%%%%%%%%%%%%%%%%%%%%%%%%%%%%%%%%%%%%%%%%%%%%%%%%%%%%%%%%%%%%%%%%%%%%%%%%%%%%%%%%%
\subsection{Perceptual Reasoning}\label{perceptual}
Fenstad's theorems \cite{fenstad:67} tell us how to properly combine first-order logic and probability theory. For all propositional formulae $\alpha\in{\cal L}$, they imply that $p(\alpha)$ should satisfy
\begin{eqnarray}\label{Fenstad}
p(\alpha)=\sum_{n: m_{n}\in\llbracket\alpha\rrbracket}^{N}p(m_{n}).
\end{eqnarray}
\begin{example}
Given the models and data shown in Figure \ref{LR}, $p(rain)=p(m_{3})+p(m_{4})$ and $p(rain\to wet)=p(m_{1})+p(m_{2})+p(m_{4})$.
\end{example}
Here, how should one define the probability distribution $p(M)$ over models? MLE is the most widely used method to estimate a probability distribution only from data. Given i.i.d. (independent and identically distributed) data ${\cal D}$, MLE defines $p(M)$ as the parameter $\Phi$ maximising the posterior distribution over ${\cal D}$, i.e., $p(M) =\argmax_{\Phi}p({\cal D}|\Phi)=\argmax_{\Phi}\prod_{d_{k}\in{\cal D}}p(d_{k}|\Phi).$
\par
Solving the simultaneous equations of the log likelihood $\log\prod_{d_{k}\in{\cal D}}p(d_{k}|\Phi)$, the maximum likelihood estimate turns out to be
\begin{eqnarray}\label{MLE}
p(M)=\left(\frac{K_{1}}{K},\frac{K_{2}}{K},...,\frac{K_{N}}{K}\right).
\end{eqnarray}
Therefore, the maximum likelihood estimate for the $n$-th model is just the ratio of the number of data in the model to the total number of data. Combining Equations (\ref{Fenstad}) and (\ref{MLE}), we have 
\begin{eqnarray}\label{combined}
p(\alpha)=\sum_{n: m_{n}\in\llbracket\alpha\rrbracket}^{N}\frac{K_{n}}{K},
\end{eqnarray}
where $K$ and $K_{n}$ denote the number of data and the number of data in the $n$-th model, respectively.
\begin{example}
Given the models and data shown in Figure \ref{LR}, the maximum likelihood estimate gives $p(M)=\left(\frac{4}{10},\frac{2}{10},\frac{1}{10},\frac{3}{10}\right)$. Therefore, $p(rain)=p(m_{3})+p(m_{4})=\frac{4}{10}$ and $p(rain\to wet)=p(m_{1})+p(m_{2})+p(m_{4})=\frac{9}{10}$.
\end{example}
Now, let $\{p(\Gamma|M,\mu=1),p(M|D),p(D)\}$ be a generative logic model with $\mu=1$. We can show that the model satisfies Equation (\ref{combined}).
\begin{eqnarray*}
p(\alpha)&=&\sum_{n=1}^{N}\sum_{k=1}^{K}p(\alpha|m_{n})p(m_{n}|d_{k})p(d_{k})=\sum_{n=1}^{N}p(\alpha|m_{n})\sum_{k=1}^{K}p(m_{n}|d_{k})p(d_{k})\\
&=&\sum_{n=1}^{N}[\![\alpha]\!]_{m_{n}}\frac{K_{n}}{K}=\sum_{n~such~that~m_{n}\in[\![\alpha]\!]}^{N}\frac{K_{n}}{K}
\end{eqnarray*}
\par
Moreover, we can show that approaches to reason knowledge from knowledge using Bayes' theorem (i.e., $p(\alpha|\Delta)=p(\Delta|\alpha)p(\alpha)/p(\Delta)$ as illustrated in Section \ref{sec:motivation}) can be replaced by the application of the generative logic model. Each term of the Bayes' theorem is developed as follows.
\begin{eqnarray*}
p(\alpha|\Delta)=\frac{\sum_{m}p(\alpha|m)p(\Delta|m)p(m)}{\sum_{m}p(\Delta|m)p(m)}&~~~~&p(\Delta|\alpha)=\frac{\sum_{m}p(\Delta|m)p(\alpha|m)p(m)}{\sum_{m}p(\alpha|m)p(m)}\\
p(\alpha)=\sum_{m}p(\alpha|m)p(m)&~~~~&p(\Delta)=\sum_{m}p(\Delta|m)p(m)
\end{eqnarray*}
The following result shows that the Bayes' theorem still holds.
\begin{eqnarray*}
\frac{p(\Delta|\alpha)p(\alpha)}{p(\Delta)}&=&\frac{\frac{\sum_{m}p(\Delta|m)p(\alpha|m)p(m)}{\sum_{m}p(\alpha|m)p(m)}\sum_{m}p(\alpha|m)p(m)}{\sum_{m}p(\Delta|m)p(m)}=p(\alpha|\Delta)
\end{eqnarray*}
%
%%%%%%%%%%%%%%%%%%%%%%%%%%%%%%%%%%%%%%%%%%%%%%%%%%%%%%%%%%%%%%%%%%%%%%%%%%%%%%%%%%%%%%%%%%%%%%%%%%%%%%%%%%%%%%%%%%%%%%%%%%%%%%%%%%%%%%%%%%%%%%%%%%%%%%%%%
\subsection{Classical consequence relation}\label{section:logic}
We showed in the last section that, given $\{p(\Gamma|M,\mu=1),p(M|D),p(D)\}$, $p(M)$ is equivalent to the maximum likelihood estimate, i.e., for all $m_{n}\in{\cal M}$,
\begin{eqnarray*}
p(m_{n})=\sum_{k=1}^{K}p(m_{n}|d_{k})p(d_{k})=\frac{K_{n}}{K}.
\end{eqnarray*}
Therefore, $\{p(\Gamma|M,\mu=1),p(M|D),p(D)\}$ is equivalent to $\{p(\Gamma|M,\mu=1),p(M)\}$ when $p(M)$ is the maximum likelihood estimate. For the sake of simplicity, we also call the latter a generative logic model and use it without distinction.
\par
Let $\{p(M),p(\Gamma|M,\mu)\}$ be a generative logic model. This section looks at the generative logic model with two assumptions. The first assumption referred to as the model assumption is that all the models are possible, i.e., $0\notin p(M)$. The second assumption referred to as the Boolean assumption is that the likelihood of each formula being true is the truth value of the formula, i.e., $\mu=1$. The following theorem relates probabilistic reasoning on logical formulae to the probability of the models of the logical formulae.
\begin{theorem}\label{thrm:lr}
Let $\{p(M),p(\Gamma|M,\mu)\}$ be a generative logic model such that $0\notin p(M)$ and $\mu=1$. For all $\alpha\in\Gamma$ and $\Delta\subseteq\Gamma$,
\begin{eqnarray*}
p(\alpha|\Delta)=
\begin{cases}
\displaystyle{\frac{\sum_{m\in\ms{\Delta}\cap\ms{\alpha}}p(m)}{\sum_{m\in \ms{\Delta}}p(m)}}&\text{if }\ms{\Delta}\neq\emptyset\\
\text{undefined}&\text{otherwise.}
\end{cases}
\end{eqnarray*}
\end{theorem}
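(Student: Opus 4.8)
The plan is to compute $p(\alpha\mid\Delta)$ directly from the definition of conditional probability, $p(\alpha\mid\Delta)=p(\alpha,\Delta)/p(\Delta)$, and show that each of the numerator and denominator collapses to a sum of model probabilities over the appropriate set of models. First I would expand $p(\Delta)$ by marginalising over models and data: $p(\Delta)=\sum_{n}\sum_{k}p(\Delta\mid m_{n})p(m_{n}\mid d_{k})p(d_{k})=\sum_{n}p(\Delta\mid m_{n})p(m_{n})$, using the equivalence (established in Section~\ref{section:logic}) between the data-driven model and $\{p(M),p(\Gamma\mid M,\mu)\}$. Then, since $\Delta=\{\alpha_{1},\dots\}$ is a multiset of formulae and the truth values are conditionally independent given a model, $p(\Delta\mid m_{n})=\prod_{j}p(\alpha_{j}\mid m_{n})=\prod_{j}\mu^{\ms{\alpha_{j}}_{m_{n}}}(1-\mu)^{1-\ms{\alpha_{j}}_{m_{n}}}$. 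Here the Boolean assumption $\mu=1$ is what makes this product equal to $1$ when $m_{n}\in\ms{\Delta}$ (i.e.\ $m_{n}$ satisfies every formula in $\Delta$) and $0$ otherwise — because a single factor with $\ms{\alpha_{j}}_{m_{n}}=0$ contributes $0^{0}\cdot 1$... I need to be slightly careful here: the factor is $\mu^{1}(1-\mu)^{0}=1$ when the formula is true and $\mu^{0}(1-\mu)^{1}=1-\mu=0$ when false, so the product is the indicator $\ms{\Delta}_{m_{n}}$ of $m_{n}\in\ms{\Delta}$. Hence $p(\Delta)=\sum_{n:m_{n}\in\ms{\Delta}}p(m_{n})=\sum_{m\in\ms{\Delta}}p(m)$.

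Next I would treat the numerator the same way. Since $\alpha\in\Gamma$ and $\Delta\subseteq\Gamma$, the combined multiset $\{\alpha\}\cup\Delta$ is again a multiset of formulae, so $p(\alpha,\Delta\mid m_{n})=p(\alpha\mid m_{n})\prod_{j}p(\alpha_{j}\mid m_{n})$, which under $\mu=1$ is the indicator of $m_{n}\in\ms{\alpha}\cap\ms{\Delta}$. Therefore $p(\alpha,\Delta)=\sum_{m\in\ms{\alpha}\cap\ms{\Delta}}p(m)$. Dividing gives the displayed formula whenever the denominator is nonzero. Finally I would argue the case split on $\ms{\Delta}$: if $\ms{\Delta}=\emptyset$ then $p(\Delta)=0$ and $p(\alpha\mid\Delta)$ is undefined, matching the statement; and conversely, if $\ms{\Delta}\neq\emptyset$ then, because the model assumption $0\notin p(M)$ guarantees $p(m)>0$ for every model, we get $\sum_{m\in\ms{\Delta}}p(m)>0$, so the conditional probability is well defined. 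This is exactly the place where the model assumption is needed: without it, $\ms{\Delta}$ could be nonempty yet carry total probability zero.

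The only mildly delicate point — and the main thing to get right rather than a genuine obstacle — is the handling of the Bernoulli factors at the boundary value $\mu=1$, making explicit that $p(\Gamma\mid M,\mu=1)$ acts as the set-membership indicator $\ms{\cdot}_{m_{n}}$ on conjunctions of formulae, and that this is precisely why $p(\Delta\mid m_{n})$ equals $1$ on $\ms{\Delta}$ and $0$ off it. I would also note in passing that the linear-time reorganisation $\sum_{n}p(\alpha\mid m_{n})p(m_{n}\mid d_{k})=\ms{\alpha}_{m(d_{k})}$ used in Example~\ref{ex:propositional} is an alternative route to the same identities, but for the theorem the model-probability form is cleaner. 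No step requires anything beyond Proposition~\ref{kolmogorov}, the definitions of the three distributions, and the conditional-independence property already stated.
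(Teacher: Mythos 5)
Your argument is correct and follows essentially the same route the paper takes: the paper defers the proof of Theorem~\ref{thrm:lr} to \cite{Kido:22}, but its own proof of the generalisation (Theorem~\ref{thrm:csr}) is exactly your computation — expand $p(\alpha|\Delta)$ over models, observe that at $\mu=1$ the Bernoulli factors turn $p(\Delta|m)$ into the indicator of $m\in\ms{\Delta}$, and collapse the sums — of which your proof is the specialisation under $0\notin p(M)$ (where $\pms{\Delta}=\ms{\Delta}$). You also correctly isolate the one place the model assumption is needed, namely ruling out a nonempty $\ms{\Delta}$ of total probability zero, so the $\ms{\Delta}\neq\emptyset$ branch is genuinely well defined.
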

\begin{proof}
See \cite{Kido:22}.
\qed
\end{proof}
Let $\alpha\in{\cal L}$ and $\Delta\subseteq{\cal L}$ such that $\ms{\Delta}\neq\emptyset$. From Theorem \ref{thrm:lr}, $p(\alpha|\Delta)=1$ if and only if $\Delta\cent\alpha$, i.e., $\ms{\Delta}\subseteq\ms{\alpha}$.
%
%%%%%%%%%%%%%%%%%%%%%%%%%%%%%%%%%%%%%%%%%%%%%%%%%%%%%%%%%%%%%%%%%%%%%%%%%%%%%%%%%%%%%
\subsection{Consequence relation}\label{section:commonsense}
Let $\{p(M),p(\Gamma|M,\mu)\}$ be a generative logic model. This section looks at the generative logic model with the Boolean assumption, i.e., $\mu=1$. This section is thus a full generalisation of the previous section discussing the generative logic models with the model assumption and the Boolean assumption. We distinguish possible models from impossible ones for a logical characterisation of the generative logic model.
\begin{definition}[Possible models]
Let $m\in{\cal M}$ and $\Delta\subseteq{\cal L}$. $m$ is a possible model of $\Delta$ if $m\in\ms{\Delta}$ and $p(m)\neq0$. 
\end{definition}
Models that are not possible are referred to as impossible. We use symbol $\pms{\Delta}$ to denote the set of all the possible models of $\Delta$, i.e., $\pms{\Delta}=\{m\in\m{\Delta}|p(m)\neq 0\}$. We assume $\pms{\Delta}_{m}=1$ if $m\in\pms{\Delta}$ and $\pms{\Delta}_{m}=0$ otherwise. Obviously, $\pms{\Delta}\subseteq\m{\Delta}$ holds, for all $\Delta\subseteq {\cal L}$, and $\pms{\Delta}=\m{\Delta}$ holds under the model assumption, i.e., $0\notin p(M)$. We define another consequence relation based only on possible models.
\begin{definition}[Consequence]
Let $\Delta\subseteq {\cal L}$ and $\alpha\in {\cal L}$. $\alpha$ is a consequence of $\Delta$, denoted by $\Delta\ent\alpha$, if $\pms{\alpha}\supseteq\pms{\Delta}$.
\end{definition}
Note that we refer to $\cent$ as the classical consequence relation and $\ent$ as the consequence relation. $\Delta\cent\alpha$ thus means that $\alpha$ is a classical consequence of $\Delta$. The following proposition shows that the consequence relation is weaker than the classical consequence relation.
\begin{proposition}
Let $\Delta\subseteq {\cal L}$ and $\alpha\in {\cal L}$. If $\Delta\cent\alpha$ then $\Delta\ent\alpha$, but not vice versa.
\end{proposition}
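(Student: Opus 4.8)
The plan is to prove the two halves separately: first the forward implication $\Delta\cent\alpha \Rightarrow \Delta\ent\alpha$, and then a counterexample showing the converse fails.

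\textbf{Forward direction.} Suppose $\Delta\cent\alpha$, which by definition means $\m{\Delta}\subseteq\m{\alpha}$. I would intersect both sides with the set of possible models $\{m\in{\cal M} : p(m)\neq 0\}$. Since $\pms{\Delta}=\m{\Delta}\cap\{m : p(m)\neq 0\}$ and likewise $\pms{\alpha}=\m{\alpha}\cap\{m : p(m)\neq 0\}$, the inclusion $\m{\Delta}\subseteq\m{\alpha}$ immediately yields $\pms{\Delta}\subseteq\pms{\alpha}$, which is exactly $\Delta\ent\alpha$. This step is essentially a one-line set-theoretic observation and I expect no difficulty there.

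\textbf{Converse fails.} Here I would exhibit a concrete generative logic model in which $\Delta\ent\alpha$ holds but $\Delta\cent\alpha$ does not. The natural construction is to take a language and models where $\m{\Delta}$ is strictly larger than $\m{\alpha}$, but every model in $\m{\Delta}\setminus\m{\alpha}$ has probability zero (i.e.\ carries no data). For instance, using a propositional language with one symbol, let $\Delta=\emptyset$ (so $\m{\Delta}$ is all models) and let $\alpha$ be that symbol; assign all data to the model where $\alpha$ is true, so the model where $\alpha$ is false is impossible. Then $\pms{\Delta}=\pms{\alpha}=\{$the true model$\}$, giving $\Delta\ent\alpha$, whereas $\m{\Delta}\not\subseteq\m{\alpha}$, so $\Delta\not\cent\alpha$. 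Alternatively the Rain/Wet setup of Figure \ref{LR} can be reused if one removes data from some model.

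\textbf{Main obstacle.} There is no real mathematical obstacle; the proposition is a direct consequence of the definitions plus Theorem \ref{thrm:lr}'s surrounding setup. The only thing requiring a little care is making the counterexample fully explicit and checking that it genuinely instantiates a legitimate generative logic model (a valid multiset of data, a well-defined $m(\cdot)$, and the resulting $p(M)$ from the maximum likelihood estimate), so that the claimed strictness of the inclusion is witnessed rather than merely asserted. I would therefore spend most of the writeup on stating that small model cleanly and verifying $\pms{\Delta}=\pms{\alpha}$ while $\m{\Delta}\supsetneq\m{\alpha}$.
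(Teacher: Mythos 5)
Your proposal is correct and follows essentially the same route as the paper: the forward direction is the same set-theoretic observation that removing (equivalently, intersecting out) the impossible models preserves the inclusion $\m{\Delta}\subseteq\m{\alpha}$, and the converse is refuted by a case where the extra models of $\Delta$ all have probability zero, which is exactly the paper's counterexample, merely instantiated concretely rather than stated abstractly.
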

\begin{proof}
($\Rightarrow$) Recall that $\Delta\cent\alpha$ iff $\ms{\Delta}\subseteq\ms{\alpha}$. For all $X\subseteq{\cal M}$, $\ms{\Delta}\setminus X\subseteq\ms{\alpha}\setminus X$ holds. ($\Leftarrow$) Suppose $\Delta$, $\alpha$ and $m$ such that $\ms{\Delta}=\ms{\alpha}\cup\{m\}$ and $p(m)=0$. $\Delta\ent\alpha$ holds, but $\Delta\not\cent\alpha$.
\qed
\end{proof}
The difference between the consequence and the classical consequence is straightforward. For all models $m$, the classical consequence relation requires all the models of the premises to be the models of the conclusion. In contrast, the consequence relation requires all the possible models of the premises to be the models of the conclusion. The following theorem relates probabilistic reasoning on logical formulae to the probability of the possible models of the logical formulae.
\begin{theorem}\label{thrm:csr}
Let $\{p(M),p(\Gamma|M,\mu)\}$ be a generative logic model such that $\mu=1$. For all $\alpha\in\Gamma$ and $\Delta\subseteq\Gamma$,
\begin{eqnarray*}
p(\alpha|\Delta)=
\begin{cases}
\displaystyle{\frac{\sum_{m\in\pms{\Delta}\cap\pms{\alpha}}p(m)}{\sum_{m\in\pms{\Delta}}p(m)}}&\text{if }\pms{\Delta}\neq\emptyset\\
\text{undefined}&\text{otherwise.}
\end{cases}
\end{eqnarray*}
\end{theorem}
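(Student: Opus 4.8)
The plan is to expand $p(\alpha|\Delta)$ directly from the definition of the generative logic model, mirroring the calculation already carried out in Example~\ref{ex:propositional} and in Section~\ref{perceptual}, but now keeping $p(M)$ general (only $\mu=1$ is assumed, not $0\notin p(M)$). First I would write
\begin{eqnarray*}
p(\alpha|\Delta)=\frac{p(\alpha,\Delta)}{p(\Delta)}=\frac{\sum_{n=1}^{N}p(\alpha,\Delta|m_{n})p(m_{n})}{\sum_{n=1}^{N}p(\Delta|m_{n})p(m_{n})},
\end{eqnarray*}
using the law of total probability over models (the data variable $D$ has already been marginalised out, since $\{p(M),p(\Gamma|M,\mu)\}$ is the reduced model). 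Then I would invoke conditional independence of formulae given a model, $p(\alpha,\Delta|m_{n})=p(\alpha|m_{n})\prod_{\beta\in\Delta}p(\beta|m_{n})$ and likewise for $p(\Delta|m_{n})$, so that with $\mu=1$ every factor $p(\beta|m_{n})$ collapses to $\ms{\beta}_{m_{n}}\in\{0,1\}$, and a product of such indicators is exactly $\ms{\Delta}_{m_{n}}$ (it is $1$ iff $m_{n}$ satisfies every formula in $\Delta$). Hence the numerator becomes $\sum_{n}\ms{\alpha}_{m_{n}}\ms{\Delta}_{m_{n}}p(m_{n})$ and the denominator $\sum_{n}\ms{\Delta}_{m_{n}}p(m_{n})$.

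The second step is to recognise that these indicator-weighted sums are precisely sums over possible models. Since $p(m_{n})=0$ contributes nothing, $\sum_{n}\ms{\Delta}_{m_{n}}p(m_{n})=\sum_{m\in\pms{\Delta}}p(m)$, and $\sum_{n}\ms{\alpha}_{m_{n}}\ms{\Delta}_{m_{n}}p(m_{n})=\sum_{m\in\pms{\Delta}\cap\pms{\alpha}}p(m)$ — here I would note that $m\in\pms{\Delta}\cap\pms{\alpha}$ iff $m\in\ms{\Delta}\cap\ms{\alpha}$ and $p(m)\neq0$, which is exactly when the summand $\ms{\alpha}_{m}\ms{\Delta}_{m}p(m)$ is nonzero. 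This yields the claimed fraction. Finally I would handle the case distinction: if $\pms{\Delta}=\emptyset$ then (using that the only models with positive probability are the possible ones) $p(\Delta)=0$, so the conditional probability $p(\alpha|\Delta)$ is undefined, matching the second branch. Conversely if $\pms{\Delta}\neq\emptyset$ the denominator is a sum of strictly positive terms and hence nonzero, so the expression is well-defined.

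I do not expect a genuine obstacle here: this is a routine marginalisation plus the $\mu=1$ collapse of the Bernoulli factors, and Theorem~\ref{thrm:lr} is literally the special case $\pms{\cdot}=\ms{\cdot}$ obtained under the extra model assumption $0\notin p(M)$. The one point requiring a little care is the bookkeeping around $p(\Delta)=0$: one must argue that $\pms{\Delta}=\emptyset$ forces $p(\Delta)=0$ (every model satisfying $\Delta$ has zero prior) and, conversely, that $\pms{\Delta}\neq\emptyset$ forces $p(\Delta)>0$ (a possible model contributes a strictly positive term), so that the dichotomy in the statement coincides exactly with $p(\Delta)=0$ versus $p(\Delta)\neq0$. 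The rest is the same algebraic pattern used in Example~\ref{ex:propositional}, just with the data layer already integrated out and without assuming all priors are positive.
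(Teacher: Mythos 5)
Your proposal is correct and follows essentially the same route as the paper's own proof: expand $p(\alpha|\Delta)$ over models, use conditional independence so that $\mu=1$ collapses each $p(\beta|m)$ to the indicator $\ms{\beta}_{m}$, discard the terms that vanish either because some formula is false in $m$ or because $p(m)=0$, and thereby reduce both sums to $\pms{\Delta}$, with the $\pms{\Delta}=\emptyset$ case being undefined by division by zero. The only difference is cosmetic — the paper splits the sums into $\pms{\Delta}$ and its complement before arguing the complementary terms vanish, whereas you collapse to indicators first and then identify the nonzero summands — so no further comment is needed.
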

\begin{proof}
Dividing models into the possible models $\pms{\Delta}$ and the others, we have
\begin{eqnarray*}
p(\alpha|\Delta)&=&\frac{\sum_{m}p(\alpha|m)p(\Delta|m)p(m)}{\sum_{m}p(\Delta|m)p(m)}\\
&=&\frac{\sum_{\hat{m}\in\pms{\Delta}}p(\alpha|\hat{m})\mu^{|\Delta|}p(\hat{m})+\sum_{m\notin\pms{\Delta}}p(\alpha|m)p(\Delta|m)p(m)}{\sum_{\hat{m}\in\pms{\Delta}}\mu^{|\Delta|}p(\hat{m})+\sum_{m\notin\pms{\Delta}}p(\Delta|m)p(m)}.
\end{eqnarray*}
$p(\Delta|m)=\prod_{\beta\in\Delta}p(\beta|m)=\prod_{\beta\in\Delta}\mu^{\m{\beta}_{m}}(1-\mu)^{1-{\m{\beta}_{m}}}$. Recall that $\pms{\Delta}=\{m\in\m{\Delta}|p(m)\neq0\}$. Thus, for all $m\notin\pms{\Delta}$, if $m\notin\m{\Delta}$ then there is $\beta\in\Delta$ such that $\m{\beta}_{m}=0$ and if $m\in\m{\Delta}$ then $p(m)=0$. Therefore, $p(\Delta|m)=0$ or $p(m)=0$ when $\mu=1$, for all $m\notin\pms{\Delta}$. We thus have
\begin{eqnarray*}
p(\alpha|\Delta)=\frac{\sum_{m\in\pms{\Delta}}p(\alpha|m)1^{|\Delta|}p(m)}{\sum_{m\in\pms{\Delta}}1^{|\Delta|}p(m)}=\frac{\sum_{m\in\pms{\Delta}}1^{\m{\alpha}_{m}}0^{1-\m{\alpha}_{m}}p(m)}{\sum_{m\in\pms{\Delta}}p(m)}.
\end{eqnarray*}
Since $1^{\m{\alpha}_{m}}0^{1-\m{\alpha}_{m}}=1^{1}0^{0}=1$ if $m\in\m{\alpha}$ and $1^{\m{\alpha}_{m}}0^{1-\m{\alpha}_{m}}=1^{0}0^{1}=0$ if $m\notin\m{\alpha}$, we have
\begin{eqnarray}\label{proof:2}
p(\alpha|\Delta)=\frac{\sum_{m\in\pms{\Delta}\cap\m{\alpha}}p(m)}{\sum_{m\in\pms{\Delta}}p(m)}=\frac{\sum_{m\in\pms{\Delta}\cap\pms{\alpha}}p(m)}{\sum_{m\in\pms{\Delta}}p(m)}.
\end{eqnarray}
In addition, if $\pms{\Delta}=\emptyset$ then $p(\alpha|\Delta)$ is undefined due to division by zero.
\qed
\end{proof}
We can now relate probabilistic reasoning on the generative logic model to the consequence relation, i.e., $\ent$.
\begin{corollary}\label{cor:2}
Let $\{p(M),p(\Gamma|M,\mu)\}$ be a generative logic model such that $\mu=1$. For all $\alpha\in\Gamma$ and $\Delta\subseteq\Gamma$ such that $\pms{\Delta}\neq\emptyset$, $p(\alpha|\Delta)=1$ iff $\Delta\ent\alpha$.
\end{corollary}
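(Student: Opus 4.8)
The plan is to derive Corollary~\ref{cor:2} directly from Theorem~\ref{thrm:csr}, treating the theorem as a black box that expresses $p(\alpha\mid\Delta)$ as a ratio of sums of model probabilities over possible models. Since $\mu=1$ and $\pms{\Delta}\neq\emptyset$ are exactly the hypotheses of Theorem~\ref{thrm:csr}, I immediately have
\[
p(\alpha\mid\Delta)=\frac{\sum_{m\in\pms{\Delta}\cap\pms{\alpha}}p(m)}{\sum_{m\in\pms{\Delta}}p(m)}.
\]
The whole argument then reduces to a statement about when this fraction equals $1$.

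First I would establish the ``if'' direction. Assume $\Delta\ent\alpha$, which by definition means $\pms{\alpha}\supseteq\pms{\Delta}$. Then $\pms{\Delta}\cap\pms{\alpha}=\pms{\Delta}$, so numerator and denominator coincide and the ratio is $1$ (the denominator is nonzero because $\pms{\Delta}\neq\emptyset$ and every $m\in\pms{\Delta}$ has $p(m)\neq0$ by the definition of possible model). Next I would prove the ``only if'' direction contrapositively: suppose $\Delta\not\ent\alpha$, i.e.\ $\pms{\Delta}\not\subseteq\pms{\alpha}$. Then there is some $m^{*}\in\pms{\Delta}\setminus\pms{\alpha}$; since $m^{*}$ is a possible model of $\Delta$, it satisfies $p(m^{*})\neq 0$. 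Hence $\sum_{m\in\pms{\Delta}}p(m)=\sum_{m\in\pms{\Delta}\cap\pms{\alpha}}p(m)+\sum_{m\in\pms{\Delta}\setminus\pms{\alpha}}p(m)$, and the second summand is strictly positive because it includes $p(m^{*})>0$ and all model probabilities are nonnegative. Therefore the numerator is strictly smaller than the denominator, giving $p(\alpha\mid\Delta)<1$, so $p(\alpha\mid\Delta)\neq1$.

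There is essentially no hard part here; the only thing to be careful about is the bookkeeping around possible models: one must use both that $\pms{\Delta}\cap\pms{\alpha}\subseteq\pms{\Delta}$ as sets \emph{and} that every term $p(m)$ with $m\in\pms{\Delta}$ is strictly positive (not merely nonnegative), which is what lets a nonempty set difference force a strict inequality rather than just a weak one. A subtle point worth noting explicitly is why $\pms{\Delta}\cap\pms{\alpha}$ rather than $\pms{\Delta}\cap\m{\alpha}$ is the right object: by Theorem~\ref{thrm:csr} these coincide because within $\pms{\Delta}$ the models already have $p(m)\neq0$, so $m\in\m{\alpha}$ with $p(m)\neq0$ is the same as $m\in\pms{\alpha}$; I would mention this to keep the argument self-contained. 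Finally I would remark that this corollary strictly generalises the classical characterisation noted after Theorem~\ref{thrm:lr} ($p(\alpha\mid\Delta)=1$ iff $\Delta\cent\alpha$ under the additional model assumption), since dropping $0\notin p(M)$ replaces $\m{\cdot}$ by $\pms{\cdot}$ and $\cent$ by $\ent$ uniformly.
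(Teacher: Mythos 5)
Your proposal is correct and follows essentially the same route as the paper: it applies Theorem~\ref{thrm:csr} to get $p(\alpha|\Delta)$ as the ratio of possible-model probabilities and then uses the fact that $p(m)\neq0$ for every $m\in\pms{\Delta}$ to conclude the ratio is $1$ iff $\pms{\Delta}\subseteq\pms{\alpha}$, i.e.\ $\Delta\ent\alpha$. The paper states this in one line; you merely spell out the two directions (including the strict-inequality argument for the converse), which is a faithful elaboration rather than a different proof.
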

\begin{proof}
Recall that $\Delta\ent\alpha$ iff $\pms{\alpha}\supseteq\pms{\Delta}$. Since Equation (\ref{proof:2}) and $p(m)\neq0$, for all $m\in\pms{\Delta}$, $p(\alpha|\Delta)=1$ iff $\pms{\alpha}\supseteq\pms{\Delta}$.
\qed
\end{proof}
Given $\pms{\Delta}=\emptyset$, $p(\alpha|\Delta)$ is undefined due to division by zero, whereas $\Delta\ent\alpha$ holds, for all $\alpha$, by definition. Corollary \ref{cor:2} thus does not hold for $\pms{\Delta}=\emptyset$.
%
%%%%%%%%%%%%%%%%%%%%%%%%%%%%%%%%%%%%%%%%%%%%%%%%%%%%%%%%%%%%%%%%%%%%%%%%%%%%%%%%%%%%%%%%%%%%%%%%%%%%%%%%%%%%%%%%%%%%%%%
\subsection{Classical consequences with maximal consistent sets}\label{section:paraconsistency}
Let $\{p(M),p(\Gamma|M,\mu)\}$ be a generative logic model. This section looks at the generative logic model with two assumptions. The first assumption is the model assumption, i.e., $0\notin p(M)$. The second assumption referred to as the limit assumption is that the likelihood of each formula being true approaches the truth value of the formula, i.e., $\mu\rightarrow 1$. To explain why we need limits in the interpretation of formal logic, consider $\alpha,\beta\in{\cal L}$ such that $\ms{\beta}=\emptyset$. We have
\begin{eqnarray*}
p(\alpha|\beta)=\frac{\sum_{m}p(\alpha|m)p(\beta|m)p(m)}{\sum_{m}p(\beta|m)p(m)}=\frac{\sum_{m}p(\alpha|m)(1-\mu)p(m)}{\sum_{m}(1-\mu)p(m)}.
\end{eqnarray*}
As discussed before, we must assume $\mu=1$ if we conform to the interpretation of formal logic. However, as seen in the previous sections, this causes a probability undefined due to division by zero. Given $\mu\neq 1$, however, we have
\begin{eqnarray*}
p(\alpha|\beta)=\frac{\sum_{m}p(\alpha|m)(1-\mu)p(m)}{\sum_{m}(1-\mu)p(m)}=\frac{\sum_{m}p(\alpha|m)p(m)}{\sum_{m}p(m)}=p(\alpha).
\end{eqnarray*}
Amongst $\mu\neq 1$, we should assume $\mu\to1$ because the expression below shows that only $\mu\to1$ results in $p(\alpha)$ given by $\mu=1$.
\begin{eqnarray*}
p(\alpha)=\sum_{m}p(\alpha|m)p(m)=\sum_{m}\mu^{\ms{\alpha}_{m}}(1-\mu)^{1-\ms{\alpha}_{m}}p(m).
\end{eqnarray*}
\begin{example}
Let us see how limits work in practice. Consider the three conditional probabilities given different inconsistent premises, shown on the right graph in Figure \ref{limit}. Given the probability distribution over models shown on the left table in Figure \ref{limit}, the first expression can be expanded as follows.
\begin{eqnarray*}
&&p(rain|rain,wet,\lnot wet)=\frac{\sum_{m}p(rain|m)^{2}p(wet|m)p(\lnot wet|m)p(m)}{\sum_{m}p(rain|m)p(wet|m)p(\lnot wet|m)p(m)}\\
&=&\frac{\mu(1-\mu)^{3}p(m_{1})+\mu(1-\mu)^{3}p(m_{2})+\mu^{3}(1-\mu)p(m_{3})+\mu^{3}(1-\mu)p(m_{4})}{\mu(1-\mu)^{2}p(m_{1})+\mu(1-\mu)^{2}p(m_{2})+\mu^{2}(1-\mu)p(m_{3})+\mu^{2}(1-\mu)p(m_{4})}\\
&=&\frac{0.4\mu(1-\mu)^{3}+0.2\mu(1-\mu)^{3}+0.1\mu^{3}(1-\mu)+0.3\mu^{3}(1-\mu)}{0.4\mu(1-\mu)^{2}+0.2\mu(1-\mu)^{2}+0.1\mu^{2}(1-\mu)+0.3\mu^{2}(1-\mu)}
\end{eqnarray*}
The right graph in Figure \ref{limit} shows $p(rain|rain,wet,\lnot wet)$ given different $\mu$ values. The graph also includes the other two conditional probabilities calculated in the same manner. Each of the open circles represents being undefined. This means that no substitution gives a probability, even though the curve approaches a certain probability. The certain probability can only be obtained by the use of limits. Indeed, given $\mu\to1$, the three conditional probabilities turn out to be 1, 0.5 and 0.4, respectively.
\end{example}
\begin{figure}[t]
\begin{tabular}{cc}
 \begin{minipage}{0.29\hsize}
\begin{center}
\begin{tabular}{c|cc|c}
 & $rain$ & $wet$ & $p(M)$\\\hline
$m_{1}$ & 0 & 0 & 0.4\\
$m_{2}$ & 0 & 1 & 0.2\\
$m_{3}$ & 1 & 0 & 0.1\\
$m_{4}$ & 1 & 1 & 0.3
\end{tabular}
\end{center}
 \end{minipage}
 \begin{minipage}{0.7\hsize}
 \begin{center}
\includegraphics[scale=0.16]{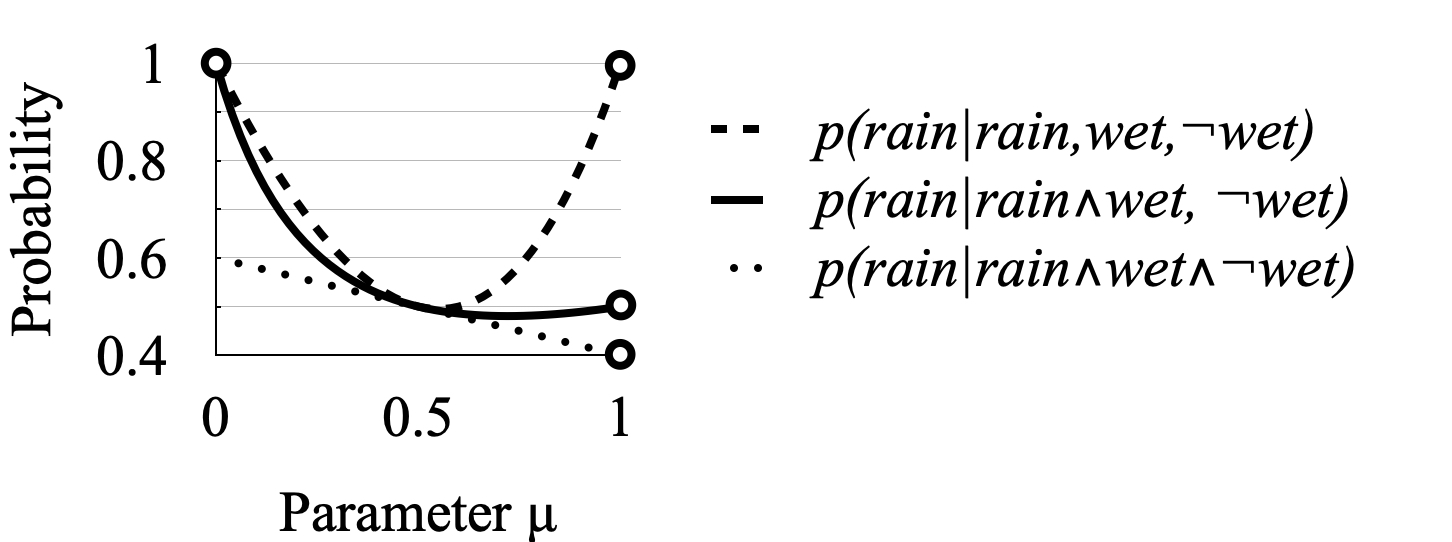}
 \end{center}
 \end{minipage}
\end{tabular}
\caption{The model distribution (left) and $\mu\in[0,1]$ versus the conditional probabilities given three different inconsistent sets of formulae (right).}
\label{limit}
\end{figure}
\par
We use maximal consistent sets to characterise the generative logic model.
\begin{definition}[Maximal consistent subsets]
Let $S,\Delta\subseteq{\cal L}$. $S$ is a maximal consistent subset of $\Delta$ if $S\subseteq\Delta$, $\ms{S}\neq\emptyset$ and $\ms{S\cup\{\alpha\}}=\emptyset$, for all $\alpha\in\Delta\setminus S$.
\end{definition}
We refer to a maximal consistent subset as a cardinality-maximal consistent subset when the set has the maximum cardinality. We use symbol $MCS(\Delta)$ to denote the set of the cardinality-maximal consistent subsets of $\Delta\subseteq {\cal L}$.
\begin{example}[Cardinality-maximal consistent sets]\label{ex:MCS}
Consider $\Delta=\{$ $rain$, $wet$, $rain\to wet$, $\lnot wet\}$. There are three maximal consistent subsets of $\Delta$: $S_{1}=\{rain,wet,rain\to wet\}$, $S_{2}=\{rain,\lnot wet\}$, and $S_{3}=\{rain\to wet,\lnot wet\}$. Only $S_{1}$ is the cardinality-maximal consistent subset of $\Delta$, i.e., $MCS(\Delta)=\{S_{1}\}$.
\end{example}
We define approximate models as the models of the cardinality-maximal consistent sets.
\begin{definition}[Approximate models]
Let $m\in{\cal M}$ and $\Delta\subseteq{\cal L}$. $m$ is an approximate model of $\Delta$ if there is $S\subseteq\Delta$ such that $S$ is a cardinality-maximal consistent subset of $\Delta$ and $m\in\ms{S}$.
\end{definition}
We use symbol $\ams{\Delta}$ to denote the approximate models of $\Delta$. In short, $\ams{\Delta}=\bigcup_{S\in MCS(\Delta)}\ms{S}$.
\begin{example}[Approximate models]\label{ex:AM}
Consider Example \ref{ex:MCS}. $m_{4}$ is the only model of the cardinality-maximal consistent subset $S_{1}$. Therefore, the approximate model of $\Delta$ is $m_{4}$. In short, $\ams{\Delta}=\bigcup_{S\in MCS(\Delta)}\ms{S}=\{m_{4}\}$.
\end{example}
The following theorem relates probabilistic reasoning on logical formulae to the probability of the approximate models of the logical formulae.
\begin{theorem}\label{thrm:pr}
Let $\{p(M),p(\Gamma|M,\mu)\}$ be a generative logic model such that $0\notin p(M)$ and $\mu\to 1$. For all $\alpha\in\Gamma$ and $\Delta\subseteq\Gamma$ such that $\ams{\Delta}\neq\emptyset$,
\begin{eqnarray*}
p(\alpha|\Delta)=\frac{\sum_{m\in\ams{\Delta}\cap\ms{\alpha}}p(m)}{\sum_{m\in \ams{\Delta}}p(m)}.
\end{eqnarray*}
\end{theorem}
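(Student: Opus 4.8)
The plan is to follow the same template as the proof of Theorem~\ref{thrm:csr}, but to keep careful track of how the Bernoulli weights $\mu^{k}(1-\mu)^{|\Delta|-k}$ behave as $\mu\to 1$, where $k$ counts how many formulae of $\Delta$ are true in a given model. Starting from the Bayesian expansion $p(\alpha|\Delta)=\frac{\sum_{m}p(\alpha|m)p(\Delta|m)p(m)}{\sum_{m}p(\Delta|m)p(m)}$ and substituting $p(\Delta|m)=\prod_{\beta\in\Delta}\mu^{\ms{\beta}_{m}}(1-\mu)^{1-\ms{\beta}_{m}}=\mu^{k_{m}}(1-\mu)^{|\Delta|-k_{m}}$ with $k_{m}=\sum_{\beta\in\Delta}\ms{\beta}_{m}$, I would set $c=\max_{m\in{\cal M}}k_{m}$ and cancel the common factor $(1-\mu)^{|\Delta|-c}$ from the numerator and the denominator. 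This leaves a ratio whose terms are $p(\alpha|m)\,\mu^{k_{m}}(1-\mu)^{c-k_{m}}p(m)$ over $\mu^{k_{m}}(1-\mu)^{c-k_{m}}p(m)$, with every exponent $c-k_{m}\ge 0$.

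The second step is to pass to the limit $\mu\to 1$ termwise. A model with $k_{m}<c$ carries a factor $(1-\mu)^{c-k_{m}}\to 0$ and disappears; a model with $k_{m}=c$ carries a factor $\mu^{k_{m}}(1-\mu)^{0}\to 1$, so the denominator tends to $\sum_{m:\,k_{m}=c}p(m)$ and the numerator tends to $\sum_{m:\,k_{m}=c}\ms{\alpha}_{m}p(m)$, using that $p(\alpha|m)=\mu^{\ms{\alpha}_{m}}(1-\mu)^{1-\ms{\alpha}_{m}}\to 1$ if $m\in\ms{\alpha}$ and $\to 0$ otherwise. The model assumption $0\notin p(M)$ makes the limiting denominator strictly positive whenever $\{m:k_{m}=c\}\neq\emptyset$, so the limit exists.

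The crux is the combinatorial identity $\{m\in{\cal M}:k_{m}=c\}=\ams{\Delta}$, which is where the definitions of cardinality-maximal consistent subset and approximate model must be unpacked. For any model $m$ the set $T_{m}=\{\beta\in\Delta:m\in\ms{\beta}\}$ is consistent (it has the model $m$), so $c$ is exactly the largest cardinality of a consistent subset of $\Delta$; a consistent subset of cardinality $c$ cannot be properly extended within $\Delta$, hence $\{S\subseteq\Delta:\ms{S}\neq\emptyset,\ |S|=c\}=MCS(\Delta)$. If $k_{m}=c$ then $T_{m}\in MCS(\Delta)$ and $m\in\ms{T_{m}}\subseteq\ams{\Delta}$; conversely, if $m\in\ms{S}$ for some $S\in MCS(\Delta)$ then $S\subseteq T_{m}$ forces $c=|S|\le|T_{m}|=k_{m}\le c$, so $k_{m}=c$. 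Feeding this identity into the limit computation — and noting that the hypothesis $\ams{\Delta}\neq\emptyset$ is precisely what stops the denominator from vanishing — gives $p(\alpha|\Delta)=\frac{\sum_{m\in\ams{\Delta}\cap\ms{\alpha}}p(m)}{\sum_{m\in\ams{\Delta}}p(m)}$. I expect this combinatorial identification, not the limit manipulation, to be the main obstacle, because it relies on the slightly delicate fact that maximality of cardinality among consistent subsets of $\Delta$ automatically entails inclusion-maximality.
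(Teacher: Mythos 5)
Your proof is correct and takes essentially the same route as the paper: the paper defers the proof of Theorem~\ref{thrm:pr} itself to \cite{Kido:22}, but its proof of the generalisation (Theorem~\ref{thrm:cfr1}) is exactly your argument---expand over models, write $p(\Delta|m)=\mu^{|\Delta|_{m}}(1-\mu)^{|\Delta|-|\Delta|_{m}}$, cancel the common factor $(1-\mu)^{|\Delta|-|\Delta|_{\hat{m}}}$, and let $\mu\to1$ so that models with non-maximal count vanish, with the model assumption keeping the limiting denominator positive. Your only addition is the explicit verification that the models attaining the maximal count $c$ are precisely $\ams{\Delta}$ (via maximal cardinality forcing inclusion-maximality), a step the paper asserts without proof; that identity is correct, so nothing is missing.
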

\begin{proof}
See \cite{Kido:22}.
\qed
\end{proof}
We can now relate probabilistic reasoning on the generative logic model to the classical consequence relation, i.e., $\cent$, with maximal consistent sets.
\begin{corollary}
Let $\{p(M),p(\Gamma|M,\mu)\}$ be a generative logic model such that $0\notin p(M)$ and $\mu\to 1$. For all $\alpha\in\Gamma$ and $\Delta\subseteq\Gamma$ such that $\ams{\Delta}\neq\emptyset$, $p(\alpha|\Delta)=1$ iff $S\cent\alpha$, for all cardinality-maximal consistent subsets $S$ of $\Delta$.
\end{corollary}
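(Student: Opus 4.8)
The plan is to reduce this corollary to Theorem~\ref{thrm:pr} together with a purely logical characterisation of the approximate models $\ams{\Delta}$ in terms of the classical consequence relation on the cardinality-maximal consistent subsets of $\Delta$. Since $\ams{\Delta}\neq\emptyset$ and $0\notin p(M)$, every term $p(m)$ appearing in the fraction from Theorem~\ref{thrm:pr} is strictly positive and the denominator is nonzero, so $p(\alpha\mid\Delta)=1$ holds if and only if $\ams{\Delta}\cap\ms{\alpha}=\ams{\Delta}$, i.e. $\ams{\Delta}\subseteq\ms{\alpha}$. Thus the corollary amounts to showing that $\ams{\Delta}\subseteq\ms{\alpha}$ if and only if $S\cent\alpha$ for every $S\in MCS(\Delta)$.

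First I would unfold the definition $\ams{\Delta}=\bigcup_{S\in MCS(\Delta)}\ms{S}$. Then $\ams{\Delta}\subseteq\ms{\alpha}$ is equivalent to $\ms{S}\subseteq\ms{\alpha}$ for every $S\in MCS(\Delta)$, simply because a union is contained in a set exactly when each component is. For the left-to-right direction of the inner equivalence this is immediate; for right-to-left I would use that each $S\in MCS(\Delta)$ is consistent, so $\ms{S}\neq\emptyset$, which guarantees that the union genuinely ``sees'' every $\ms{S}$ and no cancellation can occur. Finally I would invoke the definition of the classical consequence relation, $S\cent\alpha$ iff $\ms{S}\subseteq\ms{\alpha}$, to rewrite ``$\ms{S}\subseteq\ms{\alpha}$ for all $S\in MCS(\Delta)$'' as ``$S\cent\alpha$ for all cardinality-maximal consistent subsets $S$ of $\Delta$.'' Chaining these equivalences with the reduction from Theorem~\ref{thrm:pr} completes the argument.

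The only mildly delicate point, and the place I would be most careful, is the step where positivity of $p(m)$ is used to turn ``$p(\alpha\mid\Delta)=1$'' into the set inclusion $\ams{\Delta}\subseteq\ms{\alpha}$: one needs $\ams{\Delta}\neq\emptyset$ (given) and that every $m\in\ams{\Delta}$ has $p(m)>0$, which follows from the model assumption $0\notin p(M)$. With those in hand the numerator equals the denominator precisely when every model counted in the denominator is also counted in the numerator, i.e. when $\ams{\Delta}\cap\ms{\alpha}=\ams{\Delta}$. Everything else is bookkeeping with unions and the definition of $\cent$, so I do not expect a substantive obstacle; the proof is essentially a transcription of Corollary~\ref{cor:2}'s argument with $\pms{\Delta}$ replaced by $\ams{\Delta}$ and the consequence relation $\ent$ replaced by the family of classical consequence statements over $MCS(\Delta)$.
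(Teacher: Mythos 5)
Your proposal is correct: reducing to Theorem~\ref{thrm:pr}, using $0\notin p(M)$ and $\ams{\Delta}\neq\emptyset$ to turn $p(\alpha|\Delta)=1$ into $\ams{\Delta}\subseteq\ms{\alpha}$, and then unfolding $\ams{\Delta}=\bigcup_{S\in MCS(\Delta)}\ms{S}$ so that the inclusion is equivalent to $\ms{S}\subseteq\ms{\alpha}$, i.e.\ $S\cent\alpha$, for every $S\in MCS(\Delta)$, is exactly the argument the paper relies on (it is deferred to the cited reference here, but it is the same route the paper spells out for the analogous corollaries with $\pms{\cdot}$ and $\pams{\cdot}$). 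The only cosmetic remark is that the consistency of each $S$ is not actually needed for the union-inclusion equivalence; otherwise nothing is missing.
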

\begin{proof}
See \cite{Kido:22}.
\qed
\end{proof}
The following theorem holds when $\ams{\Delta}=\emptyset$.
\begin{theorem}\label{thrm:pr2}
Let $\{p(M),p(\Gamma|M,\mu)\}$ be a generative logic model such that $0\notin p(M)$ and $\mu\to 1$. For all $\alpha\in {\cal L}$ and $\Delta\subseteq {\cal L}$ such that $\mm{\Delta}=\emptyset$, $p(\alpha|\Delta)=p(\alpha)$.
\end{theorem}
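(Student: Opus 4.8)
The plan is to expand $p(\alpha|\Delta)$ directly with the generative logic model and to use the hypothesis $\mm{\Delta}=\emptyset$ to show that the likelihood $p(\Delta|m)$ is the same in every model $m$; this common factor then cancels and only $p(\alpha)$ survives. The one step that actually needs thought is translating $\mm{\Delta}=\emptyset$ into a property of the individual formulae of $\Delta$.

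For that step: since $\mm{\Delta}=\bigcup_{S\in MCS(\Delta)}\ms{S}$ and every cardinality-maximal consistent subset $S$ of $\Delta$ satisfies $\ms{S}\neq\emptyset$, the hypothesis $\mm{\Delta}=\emptyset$ is equivalent to $MCS(\Delta)=\emptyset$. If some $\beta\in\Delta$ were satisfiable, then $\{\beta\}$ would be a non-empty consistent subset of $\Delta$, so a consistent subset of $\Delta$ of maximum cardinality would have cardinality at least one; such a set is in particular a maximal consistent subset in the paper's sense (adding any further formula would raise the cardinality and hence break consistency), so it lies in $MCS(\Delta)$ and has a model, contradicting $\mm{\Delta}=\emptyset$. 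Therefore $\ms{\beta}=\emptyset$, i.e. $\ms{\beta}_{m}=0$ for every model $m$, for all $\beta\in\Delta$. By the product form of the likelihood established earlier, $p(\Delta|m)=\prod_{\beta\in\Delta}\mu^{0}(1-\mu)^{1}=(1-\mu)^{|\Delta|}$, which does not depend on $m$ (and is $1$ in the trivial case $\Delta=\emptyset$).

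Now substitute this into $p(\alpha|\Delta)=\frac{\sum_{m}p(\alpha|m)\,p(\Delta|m)\,p(m)}{\sum_{m}p(\Delta|m)\,p(m)}$. The constant $(1-\mu)^{|\Delta|}$ factors out of numerator and denominator; because we are in the regime $\mu\to 1$ rather than at $\mu=1$ it is non-zero, so the denominator $(1-\mu)^{|\Delta|}\sum_{m}p(m)=(1-\mu)^{|\Delta|}$ is non-zero and the cancellation is legitimate, leaving $p(\alpha|\Delta)=\sum_{m}p(\alpha|m)\,p(m)=p(\alpha)$ by marginalisation over $M$. Since this holds for every $\mu<1$, it is preserved as $\mu\to 1$. (The special case $\Delta=\{\beta\}$ with $\ms{\beta}=\emptyset$ recovers the motivating identity displayed just before the definition of maximal consistent subsets.) The main obstacle is thus the bookkeeping in the second paragraph — reading off from the definitions that $\mm{\Delta}=\emptyset$ forces every $\beta\in\Delta$ to be unsatisfiable and hence $p(\Delta|m)$ to be model-independent; the remainder is the same one-line cancellation used elsewhere in the paper, with $\mu\to1$ playing precisely the role of keeping the quotient defined. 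It also suffices, and is consistent with the finite-multiset setting used throughout, to take $\Delta$ finite so that $(1-\mu)^{|\Delta|}$ is a genuine non-zero constant for $\mu<1$.
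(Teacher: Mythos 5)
Your proposal is correct and follows essentially the same route as the paper: translate $\mm{\Delta}=\emptyset$ into every $\beta\in\Delta$ being unsatisfiable, conclude $p(\Delta|m)=\mu^{0}(1-\mu)^{|\Delta|}$ for every model $m$, and cancel this model-independent factor (nonzero for $\mu<1$) to obtain $p(\alpha|\Delta)=\sum_{m}p(\alpha|m)p(m)=p(\alpha)$ in the limit $\mu\to1$. The only difference is that you explicitly justify the step ``$\mm{\Delta}=\emptyset$ forces $\ms{\beta}=\emptyset$ for all $\beta\in\Delta$'' via cardinality-maximal consistent subsets, which the paper simply asserts.
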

\begin{proof}
Since $\mm{\Delta}=\emptyset$, we have
\begin{eqnarray*}
p(\alpha|\Delta)=\lim_{\mu\rightarrow 1}\frac{\sum_{m}p(\alpha|m)p(m)p(\Delta|m)}{\sum_{m}p(m)p(\Delta|m)}=\lim_{\mu\rightarrow 1}\frac{\sum_{m\notin\mm{\Delta}}p(\alpha|m)p(m)p(\Delta|m)}{\sum_{m\notin\mm{\Delta}}p(m)p(\Delta|m)}.
\end{eqnarray*}
Now, $\mm{\Delta}=\bigcup_{S\in MCS(\Delta)}\m{S}=\emptyset$ iff no model satisfies any single formula in $\Delta$. Therefore, we have
\begin{eqnarray*}
p(\Delta|m)&=&\prod_{\beta\in\Delta}p(\beta|m)=\prod_{\beta\in\Delta}\mu^{\m{\beta}_{m}}(1-\mu)^{1-\m{\beta}_{m}}\\
&=&\mu^{\sum_{\beta\in\Delta}\m{\beta}_{m}}(1-\mu)^{\sum_{\beta\in\Delta}(1-\m{\beta}_{m})}=\mu^{0}(1-\mu)^{|\Delta|}.
\end{eqnarray*}
Therefore, we have
\begin{eqnarray*}
p(\alpha|\Delta)&=&\lim_{\mu\rightarrow 1}\frac{\sum_{m\notin\mm{\Delta}}p(\alpha|m)p(m)\mu^{0}(1-\mu)^{|\Delta|}}{\sum_{m\notin\mm{\Delta}}p(m)\mu^{0}(1-\mu)^{|\Delta|}}=\lim_{\mu\rightarrow 1}\frac{\sum_{m\notin\mm{\Delta}}p(\alpha|m)p(m)}{\sum_{m\notin\mm{\Delta}}p(m)}\\
&=&\lim_{\mu\rightarrow 1}\frac{\sum_{m}p(\alpha|m)p(m)}{\sum_{m}p(m)}=p(\alpha).
\end{eqnarray*}
\qed
\end{proof}
%\par
\begin{comment}
In this section, we characterised generative logic models $\{p(M),p(\Gamma|M,\mu)\}$ with the limit assumption $\mu\rightarrow1$ and the model assumption $0\notin p(M)$ by using the classical consequence relation with maximal consistent sets. We can summarise Theorems \ref{thrm:pr} and \ref{thrm:pr2} as follows. For all $\alpha\in{\cal L}$ and $\Delta\in{\cal L}$,
%
\begin{eqnarray*}
p(\alpha|\Delta)=
\begin{cases}
\displaystyle{\frac{\sum_{m\in\ams{\Delta}\cap\ms{\alpha}}p(m)}{\sum_{m\in\ams{\Delta}}p(m)}}&\text{if }\ams{\Delta}\neq\emptyset\\
p(\alpha)&\text{otherwise}.
\end{cases}
\end{eqnarray*}
\end{comment}
%
%%%%%%%%%%%%%%%%%%%%%%%%%%%%%%%%%%%%%%%%%%%%%%%%%%%%%%%%%%%%%%%%%%%%%%%%%%%%%%%%%%%%%%%%%%%%%%%%%%%%%%
\subsection{Consequences with maximal possible sets}\label{section:counterfactuals}
Let $\{p(M),p(\Gamma|M,\mu)\}$ be a generative logic model. This section looks at the generative model under the limit assumption, i.e., $\mu\rightarrow 1$. This section is thus a full generalisation of the previous section discussing the generative logic models with the model assumption and the limit assumption. We introduce maximal possible sets to characterise the generative logic model.
\begin{definition}[Maximal possible subsets]
Let $S,\Delta\subseteq{\cal L}$. $S$ is a maximal possible subset of $\Delta$ if $S\subseteq\Delta$, $\pms{S}\neq\emptyset$ and $\pms{S\cup\{\alpha\}}=\emptyset$, for all $\alpha\in\Delta\setminus S$.
\end{definition}
We refer to a maximal possible subset as a cardinality-maximal possible subset when the set has the maximum cardinality. We use symbol $MPS(\Delta)$ to denote the set of the cardinality-maximal possible subsets of $\Delta\subseteq {\cal L}$.
\begin{example}[Cardinality-maximal possible sets]\label{ex:MPS}
Suppose the probability distribution $p(M)=(m_{1},m_{2},m_{3},m_{4})=(0.9,0.1,0,0)$ in Figure \ref{limit}. Consider again $\Delta=\{$ $rain$, $wet$, $rain\to wet,\lnot wet\}$. There are now two maximal possible subsets of $\Delta$: $S_{1}=\{wet,rain\to wet\}$ and $S_{2}=\{rain\to wet,\lnot wet\}$. Both $S_{1}$ and $S_{2}$ are the cardinality-maximal possible subsets of $\Delta$. Namely, $MPS(\Delta)=\{S_{1},S_{2}\}$. Note that neither $S_{3}=\{rain,rain\to wet, wet\}$ nor $S_{4}=\{rain,\lnot wet\}$ is a maximal possible subset of $\Delta$ although they are both maximal consistent subsets of $\Delta$.
\end{example}
We define possible approximate models as the possible models of the cardinality-maximal possible sets.
\begin{definition}[Possible approximate models]
Let $m\in{\cal M}$ and $\Delta\subseteq{\cal L}$. $m$ is a possible approximate model of $\Delta$ if there is $S\subseteq\Delta$ such that $S$ is a cardinality-maximal possible subset of $\Delta$ and $m\in\pms{S}$.
\end{definition}
We use symbol $\pams{\Delta}$ to denote the possible approximate models of $\Delta$. In short, $\pams{\Delta}=\bigcup_{S\in MPS(\Delta)}\pms{S}$.
\begin{example}[Possible approximate models]
Consider Example \ref{ex:MPS}. Only $m_{2}$ is the possible model of $S_{1}$ and $m_{1}$ is the possible model of $S_{2}$. Namely, $\pms{S_{1}}=\{m_{2}\}$ and $\pms{S_{2}}=\{m_{1}\}$. Therefore, $m_{1}$ and $m_{2}$ are the possible approximate models of $\Delta$, i.e., $\pams{\Delta}=\bigcup_{S\in MPS(\Delta)}\pms{S}=\{m_{1},m_{2}\}$.
\end{example}
The following theorem relates probabilistic reasoning on logical formulae to the possible approximate models of the logical formulae.
\begin{theorem}\label{thrm:cfr1}
Let $\{p(M),p(\Gamma|M,\mu)\}$ be a generative logic model such that $\mu\to 1$. For all $\alpha\in\Gamma$ and $\Delta\subseteq\Gamma$ such that $\pams{\Delta}\neq\emptyset$,
\begin{eqnarray*}
p(\alpha|\Delta)=\frac{\sum_{m\in\pams{\Delta}\cap\pms{\alpha}}p(m)}{\sum_{m\in\pams{\Delta}}p(m)}.
\end{eqnarray*}
\end{theorem}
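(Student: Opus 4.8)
The plan is to mirror the proof of Theorem~\ref{thrm:csr}, but now handling the limit $\mu\to 1$ explicitly rather than setting $\mu=1$ outright, and to identify which models survive the limit as the possible approximate models $\pams{\Delta}$. First I would write out $p(\alpha|\Delta)=\lim_{\mu\to 1}\frac{\sum_{m}p(\alpha|m)p(\Delta|m)p(m)}{\sum_{m}p(\Delta|m)p(m)}$ and expand $p(\Delta|m)=\prod_{\beta\in\Delta}\mu^{\m{\beta}_m}(1-\mu)^{1-\m{\beta}_m}=\mu^{|\Delta\cap m|}(1-\mu)^{|\Delta|-|\Delta\cap m|}$, where $|\Delta\cap m|$ abbreviates the number of formulae in $\Delta$ true in $m$. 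So each term in numerator and denominator carries a factor $(1-\mu)^{|\Delta|-|\Delta\cap m|}$, i.e. a power of $(1-\mu)$ equal to the number of formulae of $\Delta$ \emph{falsified} by $m$.

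Next I would restrict attention to models with $p(m)\neq 0$, since the others contribute nothing. Among those, the dominant terms as $\mu\to 1$ are precisely the ones with the \emph{smallest} exponent of $(1-\mu)$, i.e. the possible models that falsify the fewest formulae of $\Delta$ --- equivalently, the possible models that satisfy a maximum-cardinality possible subset of $\Delta$. Here I would need the combinatorial lemma that a possible model $m$ falsifies the minimum number of formulae of $\Delta$ if and only if $m\in\pms{S}$ for some $S\in MPS(\Delta)$: indeed the set of formulae of $\Delta$ true in $m$ is always a possible consistent subset (it is satisfied by $m$, which is possible), so minimising falsified formulae means maximising this true subset, and by definition such maximal possible subsets of maximum cardinality are exactly the elements of $MPS(\Delta)$; conversely every $m\in\pms{S}$ with $S\in MPS(\Delta)$ attains this minimum. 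Call this minimum exponent $c$. Then I factor $(1-\mu)^{c}$ out of every surviving term in both numerator and denominator; terms from models in $\pams{\Delta}$ retain a factor $\mu^{|\Delta|-c}\to 1$, while terms from possible models outside $\pams{\Delta}$ retain a strictly positive power of $(1-\mu)\to 0$. Taking the limit kills the latter and leaves $\frac{\sum_{m\in\pams{\Delta}}p(\alpha|m)\,p(m)}{\sum_{m\in\pams{\Delta}}p(m)}$.

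Finally, since $\mu\to 1$ also forces $p(\alpha|m)=\mu^{\m{\alpha}_m}(1-\mu)^{1-\m{\alpha}_m}\to \m{\alpha}_m$ (i.e. $1$ if $m\in\m{\alpha}$ and $0$ otherwise), and each $m\in\pams{\Delta}$ has $p(m)\neq 0$ so $\m{\alpha}_m=\pms{\alpha}_m$ on this set, the numerator collapses to $\sum_{m\in\pams{\Delta}\cap\pms{\alpha}}p(m)$, giving the claimed formula. The hypothesis $\pams{\Delta}\neq\emptyset$ guarantees the denominator is a nonempty sum of strictly positive probabilities, so the limit is genuinely defined and finite; that is where the hypothesis is used.

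The main obstacle I anticipate is the bookkeeping around interchanging the limit with the ratio of sums: one must first divide through by the common power $(1-\mu)^{c}$ \emph{before} taking the limit, so that numerator and denominator separately converge to finite limits with nonzero denominator, and only then invoke continuity of division. The supporting combinatorial characterisation of $\pams{\Delta}$ as the set of possible models minimising the number of falsified premises is conceptually the crux; it is the exact analogue, for possible models, of the fact used implicitly in Theorem~\ref{thrm:pr} that approximate models minimise falsified premises among all models, and it is what ties the probabilistic limit to the syntactic notion $MPS(\Delta)$.
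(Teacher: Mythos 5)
Your proposal is correct and follows essentially the same route as the paper's proof: expand $p(\Delta|m)=\mu^{|\Delta|_m}(1-\mu)^{|\Delta|-|\Delta|_m}$, split the sums over $\pams{\Delta}$ versus the rest, divide numerator and denominator by the common factor $(1-\mu)^{|\Delta|-|\Delta|_{\hat m}}$, and let the limit $\mu\to1$ kill the non-dominant terms before evaluating $p(\alpha|\hat m)$. The only difference is that you spell out the combinatorial fact that possible models falsifying the fewest premises are exactly the models in $\pams{\Delta}$ (and that they all falsify the same number), which the paper asserts without proof; this is a welcome but not essentially different addition.
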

\begin{proof}
We again use symbol $|\Delta|$ to denote the number of formulae in $\Delta$ and symbol $|\Delta|_{m}$ to denote the number of formulae in $\Delta$ that are true in $m$, i.e. $|\Delta|_{m}=\sum_{\beta\in\Delta}\m{\beta}$. Dividing models into $\pams{\Delta}$ and the others, we have 
\begin{eqnarray*}
p(\alpha|\Delta)&=&\lim_{\mu\rightarrow 1}\frac{\sum_{m}p(\alpha|m)p(m)p(\Delta|m)}{\sum_{m}p(m)p(\Delta|m)}\\
&=&\lim_{\mu\rightarrow 1}\frac{\sum_{\hat{m}\in\pams{\Delta}}p(\alpha|\hat{m})p(\hat{m})p(\Delta|\hat{m})+\sum_{m\notin\pams{\Delta}}p(\alpha|m)p(m)p(\Delta|m)}{\sum_{\hat{m}\in\pams{\Delta}}p(\hat{m})p(\Delta|\hat{m})+\sum_{m\notin\pams{\Delta}}p(m)p(\Delta|m)}.
\end{eqnarray*}
Now, $p(\Delta|m)$ can be developed as follows, for all $m$.
\begin{eqnarray*}
p(\Delta|m)&=&\prod_{\beta\in\Delta}p(\beta|m)=\prod_{\beta\in\Delta}\mu^{\m{\beta}_{m}}(1-\mu)^{1-\m{\beta}_{m}}\\
&=&\mu^{\sum_{\beta\in\Delta}\m{\beta}_{m}}(1-\mu)^{\sum_{\beta\in\Delta}(1-\m{\beta}_{m})}=\mu^{|\Delta|_{m}}(1-\mu)^{|\Delta|-|\Delta|_{m}}
\end{eqnarray*}
Therefore, $p(\alpha|\Delta)=\lim_{\mu\rightarrow 1}\frac{W+X}{Y+Z}$ where
\begin{eqnarray*}
W&=&\textstyle{\sum_{\hat{m}\in\pams{\Delta}}p(\alpha|\hat{m})p(\hat{m})\mu^{|\Delta|_{\hat{m}}}(1-\mu)^{|\Delta|-|\Delta|_{\hat{m}}}}\\
X&=&\textstyle{\sum_{m\notin\pams{\Delta}}p(\alpha|m)p(m)\mu^{|\Delta|_{m}}(1-\mu)^{|\Delta|-|\Delta|_{m}}}\\
Y&=&\textstyle{\sum_{\hat{m}\in\pams{\Delta}}p(\hat{m})\mu^{|\Delta|_{\hat{m}}}(1-\mu)^{|\Delta|-|\Delta|_{\hat{m}}}}\\
Z&=&\textstyle{\sum_{m\notin\pams{\Delta}}p(m)\mu^{|\Delta|_{m}}(1-\mu)^{|\Delta|-|\Delta|_{m}}}.
\end{eqnarray*}
Now, if $m\notin\pams{\Delta}$ then $m$ is impossible or $m$ is a possible model of a subset of $\Delta$ that is not a cardinality-maximal possible subset of $\Delta$. Therefore, $p(m)=0$ or there is $\hat{m}\in\pams{\Delta}$ such that $|\Delta|_{m}<|\Delta|_{\hat{m}}$. $|\Delta|_{\hat{m}_{1}}=|\Delta|_{\hat{m}_{2}}$ by definition, for all $\hat{m}_{1},\hat{m}_{2}\in\pams{\Delta}$. The fraction thus can be simplified by dividing the denominator and numerator by $(1-\mu)^{|\Delta|-|\Delta|_{\hat{m}}}$. We thus have $p(\alpha|\Delta)=\lim_{\mu\rightarrow 1}\frac{W'+X'}{Y'+Z'}$ where
\begin{eqnarray*}
W'&=&\textstyle{\sum_{\hat{m}\in\pams{\Delta}}p(\alpha|\hat{m})p(\hat{m})\mu^{|\Delta|_{\hat{m}}}}\\
X'&=&\textstyle{\sum_{m\notin\pams{\Delta}}p(\alpha|m)p(m)\mu^{|\Delta|_{m}}(1-\mu)^{|\Delta|_{\hat{m}}-|\Delta|_{m}}}\\
Y'&=&\textstyle{\sum_{\hat{m}\in\pams{\Delta}}p(\hat{m})\mu^{|\Delta|_{\hat{m}}}}\\
Z'&=&\textstyle{\sum_{m\notin\pams{\Delta}}p(m)\mu^{|\Delta|_{m}}(1-\mu)^{|\Delta|_{\hat{m}}-|\Delta|_{m}}}.
\end{eqnarray*}
Applying the limit operation, we can cancel out $X'$ and $Z'$ and have
\begin{align*}
p(\alpha|\Delta)=\frac{\sum_{\hat{m}\in\pams{\Delta}}p(\alpha|\hat{m})p(\hat{m})}{\sum_{\hat{m}\in\pams{\Delta}}p(\hat{m})}=\frac{\sum_{\hat{m}\in\pams{\Delta}}1^{\m{\alpha}_{\hat{m}}}0^{1-\m{\alpha}_{\hat{m}}}p(\hat{m})}{\sum_{\hat{m}\in\pams{\Delta}}p(\hat{m})}.
\end{align*}
Since $1^{\ms{\alpha}_{\hat{m}}}0^{1-\ms{\alpha}_{\hat{m}}}=1^{1}0^{0}=1$ if $\hat{m}\in\ms{\alpha}$ and $1^{\ms{\alpha}_{\hat{m}}}0^{1-\ms{\alpha}_{\hat{m}}}=1^{0}0^{1}=0$ if $\hat{m}\notin\ms{\alpha}$, we have
\begin{eqnarray}\label{proof:4}
p(\alpha|\Delta)=\frac{\sum_{\hat{m}\in\pams{\Delta}\cap\m{\alpha}}p(\hat{m})}{\sum_{\hat{m}\in\pams{\Delta}}p(\hat{m})}=\frac{\sum_{\hat{m}\in\pams{\Delta}\cap\pms{\alpha}}p(\hat{m})}{\sum_{\hat{m}\in\pams{\Delta}}p(\hat{m})}.
\end{eqnarray}
\qed
\end{proof}
We can now relate probabilistic reasoning on the generative logic model to the consequence relation, i.e., $\ent$, with maximal possible sets.
\begin{corollary}
Let $\{p(M),p(\Gamma|M,\mu)\}$ be a generative logic model such that $\mu\to 1$. For all $\alpha\in\Gamma$ and $\Delta\subseteq\Gamma$ such that $\pams{\Delta}\neq\emptyset$, $p(\alpha|\Delta)=1$ iff $S\ent\alpha$, for all cardinality-maximal possible subsets $S$ of $\Delta$.
\end{corollary}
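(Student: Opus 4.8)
The plan is to obtain this corollary from Theorem~\ref{thrm:cfr1} in exactly the way Corollary~\ref{cor:2} was obtained from Theorem~\ref{thrm:csr}: first reduce the equation $p(\alpha|\Delta)=1$ to a set-inclusion statement about possible models, and then unfold the definitions of $\pams{\Delta}$ and of the consequence relation $\ent$ to translate that inclusion into the condition on cardinality-maximal possible subsets.

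First I would apply Theorem~\ref{thrm:cfr1}, which is available since $\pams{\Delta}\neq\emptyset$, to write $p(\alpha|\Delta)=\bigl(\sum_{m\in\pams{\Delta}\cap\pms{\alpha}}p(m)\bigr)/\bigl(\sum_{m\in\pams{\Delta}}p(m)\bigr)$. Every $m\in\pams{\Delta}$ is a possible model, hence $p(m)\neq 0$, so the denominator is strictly positive and all summands are nonnegative; therefore $p(\alpha|\Delta)=1$ holds iff numerator equals denominator iff $\sum_{m\in\pams{\Delta}\setminus\pms{\alpha}}p(m)=0$ iff $\pams{\Delta}\setminus\pms{\alpha}=\emptyset$, i.e. iff $\pams{\Delta}\subseteq\pms{\alpha}$. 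Next I would rewrite this using $\pams{\Delta}=\bigcup_{S\in MPS(\Delta)}\pms{S}$: a union of sets is contained in $\pms{\alpha}$ iff each member is, so $\pams{\Delta}\subseteq\pms{\alpha}$ iff $\pms{S}\subseteq\pms{\alpha}$ for every $S\in MPS(\Delta)$. Finally, by the definition of the consequence relation, $\pms{S}\subseteq\pms{\alpha}$ is precisely $S\ent\alpha$, and $MPS(\Delta)$ is by definition the set of cardinality-maximal possible subsets of $\Delta$; chaining the equivalences gives the claim. I would also note in passing that $\pams{\Delta}\neq\emptyset$ forces $MPS(\Delta)\neq\emptyset$ (each $S\in MPS(\Delta)$ has $\pms{S}\neq\emptyset$), so the universally quantified right-hand side is not vacuous.

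I expect essentially no obstacle, since all the real work is already carried out in Theorem~\ref{thrm:cfr1}. The only point needing a little care is the step that passes from ``the $p(m)$-weighted sum over $\pams{\Delta}\setminus\pms{\alpha}$ vanishes'' to ``$\pams{\Delta}\setminus\pms{\alpha}=\emptyset$'': this uses $p(m)\neq 0$ for every $m\in\pams{\Delta}$, which is exactly the ``possible'' clause in the definition of a possible approximate model, and it is also what makes the statement fail if one replaces possible models by arbitrary models (as already observed for the $\ent$ case after Corollary~\ref{cor:2}).
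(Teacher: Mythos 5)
Your proposal is correct and follows essentially the same route as the paper: it invokes the formula established in Theorem~\ref{thrm:cfr1} (Equation~(\ref{proof:4})) to get $p(\alpha|\Delta)=1$ iff $\pams{\Delta}\subseteq\pms{\alpha}$, then decomposes $\pams{\Delta}=\bigcup_{S\in MPS(\Delta)}\pms{S}$ to read off $S\ent\alpha$ for every cardinality-maximal possible subset $S$. You merely spell out the positivity step ($p(m)\neq0$ on $\pams{\Delta}$) and the final unfolding of $\ent$ that the paper leaves implicit.
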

\begin{proof}
From Equation (\ref{proof:4}), $p(\alpha|\Delta)=1$ iff $\pams{\Delta}\subseteq\pms{\alpha}$. Since $\pams{\Delta}=\bigcup_{S\in MPS(\Delta)}\pms{S}$, $p(\alpha|\Delta)=1$ iff $\bigcup_{S\in MPS(\Delta)}\pms{S}\subseteq\pms{\alpha}$.
\end{proof}
The following theorem holds when $\pams{\Delta}=\emptyset$.
\begin{theorem}\label{thrm:cfr2}
Let $\{p(M),p(\Gamma|M,\mu)\}$ be a generative logic model such that $\mu\to 1$. For all $\alpha\in\Gamma$ and $\Delta\subseteq\Gamma$ such that $\pams{\Delta}=\emptyset$, $p(\alpha|\Delta)=p(\alpha)$.
\end{theorem}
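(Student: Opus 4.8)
The plan is to imitate the proof of Theorem~\ref{thrm:pr2}, replacing ``model'' by ``possible model'' throughout and letting impossible models be absorbed into vanishing summands rather than excluded by a model assumption (which is why $0\notin p(M)$ is not needed here).

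First I would write $p(\alpha|\Delta)=\lim_{\mu\to1}\frac{\sum_{m}p(\alpha|m)p(m)p(\Delta|m)}{\sum_{m}p(m)p(\Delta|m)}$ and drop every summand with $p(m)=0$, so that both sums range only over possible models. The crucial combinatorial observation is that $\pams{\Delta}=\emptyset$ forces $\m{\beta}_{m}=0$ for every $\beta\in\Delta$ and every possible $m$: otherwise $\{\beta\}$ would be a nonempty possible subset of $\Delta$, hence extendable to a maximal possible subset, so $MPS(\Delta)\neq\emptyset$, and since every $S\in MPS(\Delta)$ has $\pms{S}\neq\emptyset$ by definition we would get $\pams{\Delta}=\bigcup_{S\in MPS(\Delta)}\pms{S}\neq\emptyset$, a contradiction. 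Consequently, for every possible $m$ we have $|\Delta|_{m}:=\sum_{\beta\in\Delta}\m{\beta}_{m}=0$, whence $p(\Delta|m)=\prod_{\beta\in\Delta}\mu^{\m{\beta}_{m}}(1-\mu)^{1-\m{\beta}_{m}}=(1-\mu)^{|\Delta|}$.

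I would then factor this common $(1-\mu)^{|\Delta|}$ out of the numerator and denominator and cancel it --- which is legitimate because $\mu\to1$ means $\mu\neq1$, so $(1-\mu)^{|\Delta|}\neq0$ --- obtaining $p(\alpha|\Delta)=\lim_{\mu\to1}\frac{\sum_{m:\,p(m)\neq0}p(\alpha|m)p(m)}{\sum_{m:\,p(m)\neq0}p(m)}$. Re-adding the zero summands and using $\sum_{m}p(m)=1$ reduces the right-hand side to $\lim_{\mu\to1}\sum_{m}p(\alpha|m)p(m)=p(\alpha)$.

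The only genuinely delicate point is the combinatorial step: making precise that $\pams{\Delta}=\emptyset$ is equivalent to ``no possible model satisfies any single formula of $\Delta$'', which relies on the same convention used implicitly in the proof of Theorem~\ref{thrm:pr2} (namely that maximal possible subsets are taken to be nonempty). Everything else is the routine cancellation of $(1-\mu)^{|\Delta|}$ already seen in Theorems~\ref{thrm:cfr1} and~\ref{thrm:pr2}.
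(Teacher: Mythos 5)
Your proposal is correct and follows essentially the same route as the paper: the paper omits the proof as being identical to that of Theorem~\ref{thrm:pr2}, which is exactly your argument — observe that $\pams{\Delta}=\emptyset$ means no possible model satisfies any single formula of $\Delta$, so $p(\Delta|m)=(1-\mu)^{|\Delta|}$ on every summand that matters, cancel this common factor, and pass to the limit to get $p(\alpha)$. Your explicit justification of the combinatorial step (and of the nonemptiness convention it relies on) is a slightly more careful rendering of the paper's one-line claim, not a different method.
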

\begin{proof}
Omitted but same as Theorem \ref{thrm:pr2}.
\qed
\end{proof}
\section{Conclusions and Future Work}
We presented a simple probabilistic model to unify perceptual reasoning and logical reasoning. The key problem tackled in this paper is the disconnection between reasoning (the process of deriving new knowledge) and learning (the process of obtaining model parameters). Our model captures the process by which data generate perceptual and logical knowledge. We showed that reasoning with our model is consistent with the existing approaches with MLE. We also showed that reasoning with our model can be characterised in terms of the logical consequence relations given by substantial extensions of the paper \cite{Kido:22}.
\par
There is still much work to be done. The current model can only handle static perception with discrete data. Dynamic perception with/without policy and flexible policy selection \cite{Smith:22} are important future work. The future work also includes the neuroscientific validity of our model and the mathematical analysis of the relationship between our model and variational inference used in free energy principle \cite{friston:10} and predictive coding \cite{Hohwy:08}.
%
%%%%%%%%%%%%%%%%%%%%%%%%%%%%%%%%%%%%%%%%%%%%%%%%%%%%%%%%%%%%%%%%%%%%%%%%%%%%%%%%%%%%%%%%%%%%%%%%%%%%%%%%%%%%%%%%%%%%%%%%%%%%%%%%%%%%%%%%%%%%%%%%%%%%%%%%%%%%%%%%%%%%%%%%%%%%%%%%%%%%%%%%%%%%%%%%%%%%%%%%%%%%
\bibliographystyle{splncs04}
\bibliography{btx_kido}

\begin{thebibliography}{10}
\providecommand{\url}[1]{\texttt{#1}}
\providecommand{\urlprefix}{URL }
\providecommand{\doi}[1]{https://doi.org/#1}

\bibitem{bishop:06}
Bishop, C.M.: Pattern Recognition and Machine Learning. Springer New York, NY,
  1 New York Plaza, Suite 4600, New York, NY 10004-1562 (2006)

\bibitem{fenstad:67}
Fenstad, J.: Representations of probabilities defined on first order languages.
  In: Sets, Models and Recursion Theory, vol.~46, pp. 156--172. Elsevier (1967)

\bibitem{friston:10}
Friston, K.: The free-energy principle: a unified brain theory? Nature Reviews
  Neuroscience  \textbf{11},  127--138 (2010)

\bibitem{Funamizu:15}
Funamizu, A., Doya, K.: Prediction: a bayesian filter hypothesis of the
  cortical microcircuit ({J}apanese). SEITAI NO KAGAKU  \textbf{66}(1),  33--37
  (02 2015)

\bibitem{Helmholtz:25}
von Helmholtz, H.: Helmholtz's treatise on physiological optics. Optical
  Society of America  \textbf{3} (1925)

\bibitem{Hohwy:14}
Hohwy, J.: The Predictive Mind. Oxford University Press (2014)

\bibitem{Hohwy:08}
Hohwy, J., Roepstorff, A., Friston, K.: Predictive coding explains binocular
  rivalry: An epistemological review. Cognition  \textbf{108},  687--701 (2008)

\bibitem{Kido:22}
Kido, H.: Generative logic models for data-based symbolic reasoning. In: Proc.
  8th International Workshop on Artificial Intelligence and Cognition (2022)

\bibitem{knill:04}
Knill, D.C., Pouget, A.: The bayesian brain: the role of uncertainty in neural
  coding and computation. Trends in Neurosciences  \textbf{27},  712--719
  (2004)

\bibitem{Smith:22}
Smith, R., Friston, K.J., Whyte, C.J.: A step-by-step tutorial on active
  inference and its application to empirical data. Journal of Mathematical
  Psychology  \textbf{107},  102632 (2022)

\end{thebibliography}
\end{document}